\def\BibTeX{{\rm B\kern-.05em{\sc i\kern-.025em b}\kern-.08em
    T\kern-.1667em\lower.7ex\hbox{E}\kern-.125emX}}
\newtheorem{definition}{Definition}[section] 
\newtheorem{theorem}{Theorem}[section]
\newtheorem{lemma}{Lemma}[section]
\begin{document}

\title{An Intersectional Definition of Fairness\\
\thanks{This work was performed under the following financial assistance award: 60NANB18D227 from
U.S. Department of Commerce, National Institute of
Standards and Technology.
}
}


\author{\IEEEauthorblockN{James R. Foulds, Rashidul Islam, Kamrun Naher Keya, Shimei Pan}
\IEEEauthorblockA{\textit{Department of Information Systems} \\
\textit{University of Maryland, Baltimore County, USA}\\
\{jfoulds, islam.rashidul, kkeya1, shimei\}@umbc.edu}
}

\maketitle

\begin{abstract}
  	We propose definitions of fairness in machine learning and artificial intelligence systems that are informed by the framework of intersectionality, a critical lens arising from the Humanities literature which analyzes how interlocking systems of power and oppression affect individuals along overlapping dimensions including gender, race, sexual orientation, class, and disability. We show that our criteria behave sensibly for any subset of the set of protected attributes, and we prove economic, privacy, and generalization guarantees. We provide a learning algorithm which respects our intersectional fairness criteria. Case studies on census data and the COMPAS criminal recidivism dataset demonstrate the utility of our methods.
\end{abstract}



\section{Introduction}

The increasing impact of artificial intelligence and machine learning technologies on many facets of life, from commonplace movie recommendations to consequential criminal justice sentencing decisions, has prompted concerns that these systems may behave in an unfair or discriminatory manner \cite{barocas2016big,executive2016big,noble2018algorithms}.  A number of studies have subsequently demonstrated that bias and fairness issues in AI are both harmful and pervasive \cite{angwin2016machine,buolamwini2018gender,bolukbasi2016man}.  The AI community has responded by developing a broad array of mathematical formulations of fairness and learning algorithms which aim to satisfy them \cite{dwork2012fairness,hardt2016equality,berk2017convex,zhao2017men}.  Fairness, however, is not a purely technical construct, having social, political, philosophical and legal facets \cite{campolo2017ai}.  At this juncture, the necessity has become clear for interdisciplinary analyses of fairness in AI and its relationship to society, to civil rights, and to the social goals which are to be achieved by mathematical fairness definitions, which have not always been made explicit \cite{mitchell2018prediction}.

In particular, it is important to connect fairness and bias in algorithms to the broader context of fairness and bias in society, which has long been the concern of civil rights and feminist scholars and activists \cite{noble2018algorithms, keyes2019mulching}.  In this work, we address the specific challenges of fairness in AI that are motivated by \textbf{intersectionality}, an analytical lens from the third-wave feminist movement which emphasizes that civil rights and feminism should be considered simultaneously rather than separately \cite{crenshaw1989demarginalizing}.  We propose \textbf{intersectional AI fairness criteria} and perform a \emph{comprehensive, interdisciplinary analysis} of their relation to the concerns of diverse fields including the \textbf{humanities}, \textbf{law}, \textbf{privacy}, \textbf{economics}, and \textbf{statistical machine learning}.  Our contributions include:
\begin{enumerate}
    \item A critical analysis of the consequences of intersectionality in the particular context of fairness for AI,
    \item Three novel fairness metrics: \emph{differential fairness (DF)} which aims to uphold intersectional fairness for AI and machine learning systems, \emph{DF bias amplification}, a slightly more politically conservative fairness definition which measures the bias specifically introduced by an algorithm, and \emph{differential fairness with confounders} which can alter outcome distributions \emph{(DFC)}, 
    \item Proofs of the desirable intersectionality, privacy, economic, and generalization properties of our metrics,
    \item A learning algorithm which enforces our criteria, and
    \item Case studies on census and criminal recidivism data which demonstrate our methods' practicality and their benefits versus the subgroup fairness criterion of \cite{kearns2018preventing}.
\end{enumerate}


\section{Intersectionality and Fairness in AI}

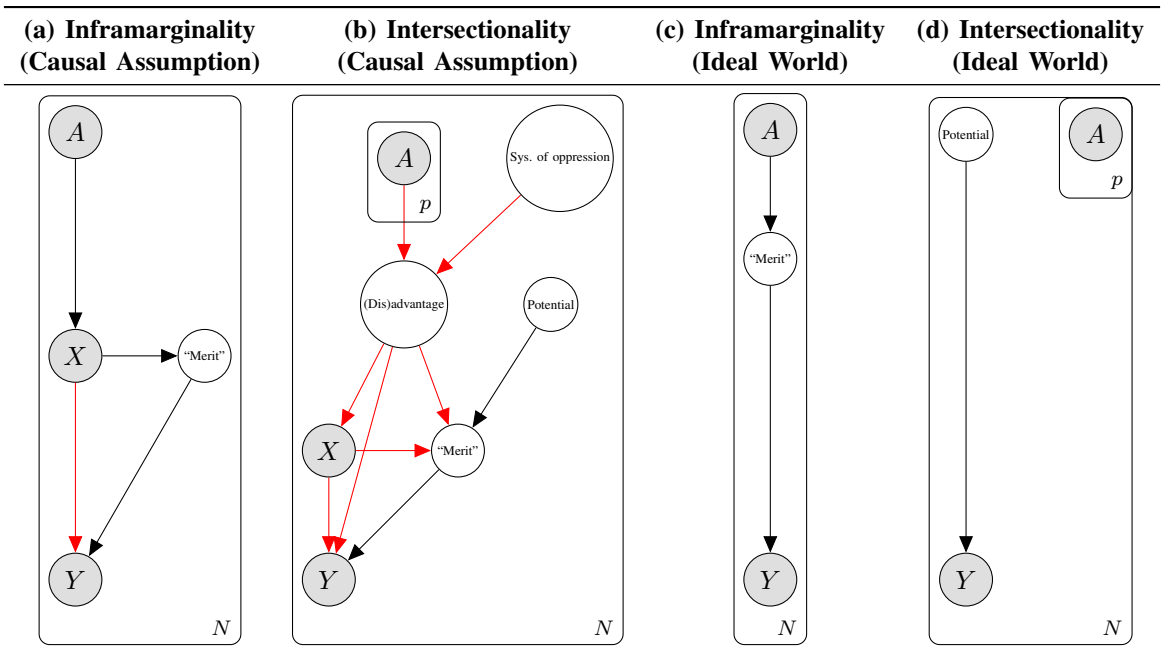
\begin{figure*}[t]
\centering
\begin{tabular}{cccc}
	\toprule
	\textbf{(a) Inframarginality} & \textbf{(b) Intersectionality} & \textbf{(c) Inframarginality} & \textbf{(d) Intersectionality} \\
	\textbf{(Causal Assumption)} & \textbf{(Causal Assumption)} & \textbf{(Ideal World)} &  \textbf{(Ideal World)} \\ 
	\midrule
	\begin{tikzpicture}[y=2.26cm]
		\node[obs] (Y) {$Y$} ;
		\node[obs, above=of Y] (X) {$X$} ;
		\node[latent, right=of X] (M) {\tiny ``Merit''};
		\node[obs, above=of X] (A) {$A$} ;
		
		\edge {A} {X}
		\edge {X} {M}
		\edge[red] {X} {Y}
		\edge {M} {Y}
		
		\plate {infraPeople} {(A)(X)(M)(Y)} {$N$}
	\end{tikzpicture}
	 &
	 
	 \begin{tikzpicture}
		 \node[obs] (Y) {$Y$} ;
		 \node[obs, above=of Y] (X) {$X$} ;
		 \node[latent, right=of X] (M) {\tiny ``Merit''} ;
		 \node[latent, above=of X, xshift = 1cm] (P) {\tiny (Dis)advantage} ;
		 \node[latent, right=of P] (U) {\tiny Potential} ;
		 \node[obs, above=of P] (A) {$A$} ;
		 \node[latent, right=of A] (S) {\tiny Sys. of oppression} ;

		 \edge[red] {A,S} {P}
		 \edge[red] {P} {M}
		 \edge[] {U} {M}
		 \edge[red] {P} {X}
		 \edge[red] {X} {M}
		 \edge[red] {X} {Y}
		 \edge {M} {Y}
		 \edge[red] {P} {Y}
		 
		 \plate {interPeople} {(A)(X)(M)(Y)(P)(U)(S)} {$N$}
		 \plate {interAtts} {(A)} {$p$}
	 \end{tikzpicture}
	&
	\begin{tikzpicture}[y=3.55cm]
		\node[obs] (Y) {$Y$} ;
		\node[latent, above=of Y] (M) {\tiny ``Merit''};
		\node[obs, above=of X] (A) {$A$} ;
		
		\edge {A} {M}
		\edge {M} {Y}
		
		\plate {infraIdealPeople} {(A)(M)(Y)} {$N$}	
	\end{tikzpicture}
	& 
	\begin{tikzpicture}[y=5.2cm]
	\node[obs] (Y) {$Y$} ;
	\node[latent, above=of Y] (U) {\tiny Potential} ;
	\node[obs, right=of U] (A) {$A$} ;	
	
	\edge {U} {Y}
	
	\plate {interPeople} {(A)(Y)(U)} {$N$}
	\plate {interAtts} {(A)} {$p$}
	\end{tikzpicture} \\
	\bottomrule
\end{tabular}
\caption{Implicit causal assumptions (a,b) and values-driven ideal world scenarios (c,d) for inframarginality and intersectionality notions of fairness. Here, $A$ denotes protected attributes, $X$ observed attributes, $Y$ outcomes, $N$ individuals, $p$ number of protected attributes.  Red arrows denote potentially unfair causal pathways, which are removed to obtain the ideal world scenarios (c,d). The above summarizes broad strands of research; individual works may differ.\label{fig:causalAssump}}
\end{figure*}

We begin with an introduction to intersectionality and an analysis of its relationship to fairness in an artificial intelligence and machine learning context.  Intersectionality is a lens for examining societal unfairness which originally arose from the observation that sexism and racism have intertwined effects, in that the harm done to Black women by these two phenomena is more than the sum of the parts \cite{crenshaw1989demarginalizing,truth1851aint}.  The notion of intersectionality was later extended to include overlapping injustices along more general axes \cite{collins2002black}.  In its general form, intersectionality emphasizes that systems of oppression built into society lead to \emph{systematic disadvantages along intersecting dimensions}, which include not only gender, but also race, nationality, sexual orientation, disability status, and socioeconomic class \cite{collective1977black,collins2002black,crenshaw1989demarginalizing,hooks1981ain,lorde1984age,truth1851aint}.
These systems are interlocking in their effects on individuals at \emph{each intersection of the affected dimensions}.  

The term \emph{intersectionality} was introduced by  Kimberl\'e Crenshaw in the 1980's \cite{crenshaw1989demarginalizing} and popularized in the 1990's, e.g. by Patricia Hill Collins \cite{collins2002black}, although the ideas are much older \cite{collective1977black,truth1851aint}.  In the context of machine learning and fairness, intersectionality was recently considered by \cite{buolamwini2018gender}, who studied the impact of the intersection of gender and skin color on computer vision performance, and by \cite{kearns2018preventing,hebert-johnson2018multicalibration}, who aimed to protect certain subgroups in order to prevent ``fairness gerrymandering.'' From a humanities perspective, \cite{noble2018algorithms} critiqued the behavior of the Google search engine with an intersectional lens, by examining the search results for terms relating to women, people of color, and their intersections, e.g.  ``Black girls.''

Intersectionality has implications for AI fairness beyond the use of multiple \emph{protected attributes}.  Many fairness definitions aim (implicitly or otherwise) to uphold the principle of \textbf{infra-marginality}, which states that differences between protected groups in the distributions of ``merit'' or ``risk'' (e.g. the probability of carrying contraband at a policy stop) should be taken into account when determining whether bias has occurred \cite{simoiu2017problem}.
A closely related argument is that parity of outcomes between groups is at odds with accuracy \cite{dwork2012fairness,hardt2016equality}.  
Intersectionality theory provides a counterpoint: these differences in risk/merit, while acknowledged, are frequently due to systemic structural disadvantages such as racism, sexism, inter-generational poverty, the school-to-prison pipeline, mass incarceration, and the prison-industrial complex \cite{collective1977black,crenshaw1989demarginalizing,davis2011prisons,hooks1981ain,wald2003defining}.
Systems of oppression can lead individuals to perform below their potential, for instance by reducing available cognitive bandwidth \cite{verschelden2017bandwidth}, or by increasing the probability of incarceration \cite{davis2011prisons,alexander2012new}.
In short, the \emph{infra-marginality} principle makes the implicit assumption that society is a fair, level playing field, and thus differences in ``merit'' or ``risk'' between groups in data and predictive algorithms are often to be considered legitimate.  In contrast, \emph{intersectionality} theory posits that these \textbf{distributions of merit and risk are often influenced by unfair societal processes} (see Figure \ref{fig:causalAssump}).

As an example of a scenario affected by unfair processes, consider the task of predicting prospective students' academic performance for use in college admissions decisions.  As discussed in detail by \cite{verschelden2017bandwidth}, and references therein, individuals belonging to marginalized and non-majority groups are disproportionately impacted by challenges of poverty and racism (in its structural, overt, and covert forms), including chronic stress, access to healthcare, under-treatment of mental illness, micro-aggressions, stereotype threat, disidentification with academics, and belongingness uncertainty.  Similarly, LGBT and especially transgender, non-binary, and gender non-conforming students disproportionately suffer bullying, discrimination, self-harm, and the burden of concealing their identities.
These challenges are often further magnified at the intersection of affected groups.  A survey of 6,450 transgender and gender non-conforming individuals found that the most serious discrimination was experienced by people of color, especially Black respondents \cite{grant2011injustice}.
Verschelden explains the impact of these challenges as a tax on the ``cognitive bandwidth'' of non-majority students, which in turn affects their academic performance.  She states that the evidence is clear 
\begin{quote}
	\emph{
	``...that racism (and classism, homophobia, etc.) has made people physically, mentally, and spiritually ill and dampened their chance at a fair shot at higher education (and at life and living).''}
\end{quote}
A classifier trained to predict students' academic performance from historical data hence aims to emulate outcomes that were substantially affected by unfair factors \cite{barocas2016big}.
An \emph{accurate predictor} for a student's GPA may therefore not correspond to a \emph{fair decision-making procedure} \cite{berk2017fairness}.
We can resolve this apparent conflict if we are careful to distinguish between the \emph{statistical problem} of classification, and the \emph{economic problem} of the assignment of outcomes (e.g. admission decisions) to individuals based on classification. 
Viewing the classifier's task as a policy question, it becomes clear that high accuracy need not be the primary goal of the system, especially when we consider that ``accuracy'' is measured on unfair data.\footnote{Amazon recently abandoned a classifier for job candidate selection which was found to be gender biased \cite{dastin2018amazon}.  We speculate that this was likely due to similar issues.}

In Figure \ref{fig:causalAssump} we summarize the causal assumptions regarding society and data, and the idealized ``perfect world'' scenarios implicit in the two approaches to fairness.  Inframarginality \emph{(a)} emphasizes that the distribution over relevant attributes $X$ varies across protected groups $A$, which leads to potential differences in so-called ``merit'' or ``risk'' between groups, typically presumed to correspond to latent ability and thus ``deservedness'' of outcomes $Y$ \cite{simoiu2017problem}.  Intersectionality \emph{(b)} emphasizes that we must also account for systems of oppression which lead to (dis)advantage at the intersection of multiple protected groups, impacting all aspects of the system including the ability of individuals to succeed (``merit'') to their potential, had they not been impacted by (dis)advantage \cite{crenshaw1989demarginalizing}.  In the ideal world that an algorithmic (or other) intervention aims to achieve, inframarginality-based fairness desires that individual ``merit'' is the sole determiner of outcomes \emph{(c)} \cite{simoiu2017problem,hardt2016equality}, which can lead to disparity between groups \cite{dwork2012fairness}.  In ideal intersectional fairness \emph{(d)}, since ability to succeed is affected by unfair processes, it is desired that this unfairness is corrected and individuals achieve their true potential \cite{verschelden2017bandwidth}.  Assuming potential does not substantially differ across protected groups, this implies that parity between groups is typically desirable.\footnote{Disparity could still be desirable if there are legitimate confounders which depend on protected groups, e.g. choice of department that individuals apply to in college admissions.  We address this scenario in Section \ref{sec:confounders}.}

In light of the above, we argue that an \emph{intersectional} definition of fairness in AI should satisfy the following criteria:
\begin{enumerate}[label=\Alph*]
	\item Multiple protected attributes should be considered. \label{criterion:multiAtt}
	\item \textbf{All} of the intersecting values of the protected attributes, e.g. \emph{Black women}, should be protected by the definition. \label{criterion:intersect}
	\item We should still also ensure that protection is provided on individual protected attribute values, e.g. \emph{women}. \label{criterion:individualAtt} 
	\item The definition should protect minority groups, who are often particularly affected by discrimination in society. \label{criterion:minority}
	\item The definition should ensure that systematic differences between the protected groups, assumed to be due to structural oppression, are rectified, rather than codified. \label{criterion:oppress}
\end{enumerate}
These desiderata do not uniquely specify a fairness definition, but they provide a set of guidelines to which legal, political, and contextual considerations can then be applied to determine an appropriate fairness measure for a particular task. 

\begin{figure}[t]     
		\centerline{\includegraphics[width=0.45\textwidth]{./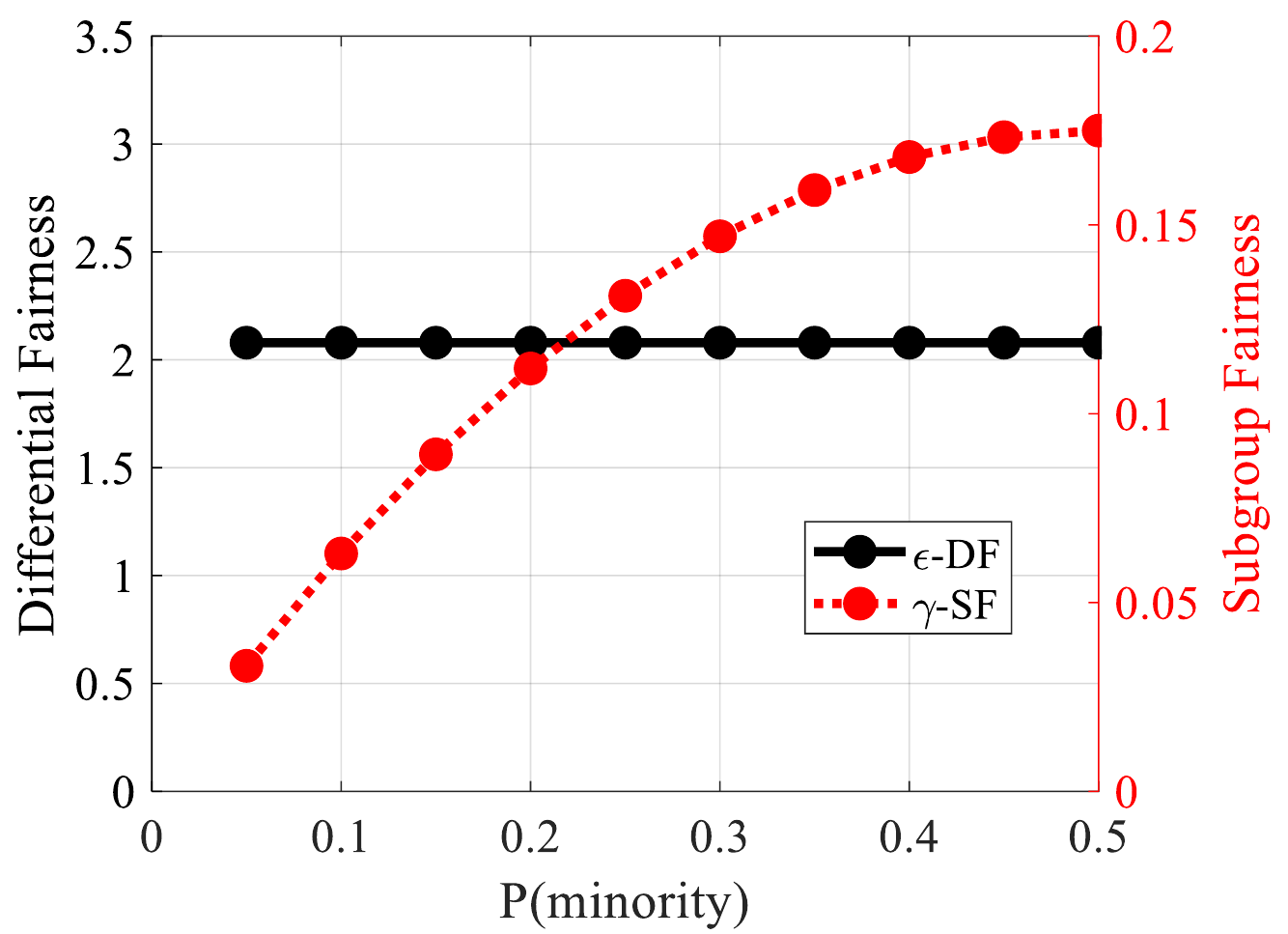}}
		\caption{\small 
		Toy example: probability of the ``positive'' class is 0.8 for a majority group, 0.1 for a minority group, varying $P(\mbox{minority})$.
		}
		\label{fig:toyExample} 
\end{figure}

\section{Existing Fairness Definitions}
\label{sec:intersectionality}
We now consider existing fairness definitions and their relation to the aforementioned criteria (see the Appendix for further discussion of related work).  Relevant fairness definitions aim to detect and prevent discriminatory (or other) bias with respect to a \emph{set of protected attributes}, such as gender, race, and disability status.  Given criterion \ref{criterion:multiAtt}, we focus on multi-attribute definitions.  The two dominant multi-attribute approaches in the literature are \emph{subgroup fairness} \cite{kearns2018preventing} and \emph{multicalibration} \cite{hebert-johnson2018multicalibration}.

We adapt the notation of \cite{kifer2014pufferfish} to all definitions in this paper.  
Suppose $M(\mathbf{x})$ is a (possibly randomized) mechanism which takes an instance $\mathbf{x} \in \chi$ and produces an outcome $y$ for the corresponding individual, $S_1, \ldots, S_p$ are discrete-valued protected attributes, $A = S_1 \times S_2 \times \ldots \times S_p$, and $\theta$ is the distribution which generates $\mathbf{x}$. For example, the mechanism $M(\mathbf{x})$ could be a deep learning model for a lending decision, $A$ could be the applicant's possible gender and race, and $\theta$ the joint distribution of credit scores and protected attributes.  The protected attributes are included in the attribute vector $\mathbf{x}$, although $M(\mathbf{x})$ is free to disregard them (e.g. if this is disallowed). The setting is illustrated in Figure \ref{fig:DFsetting}.

\begin{figure*}[t]
    \centering
	\includegraphics[width=\textwidth]{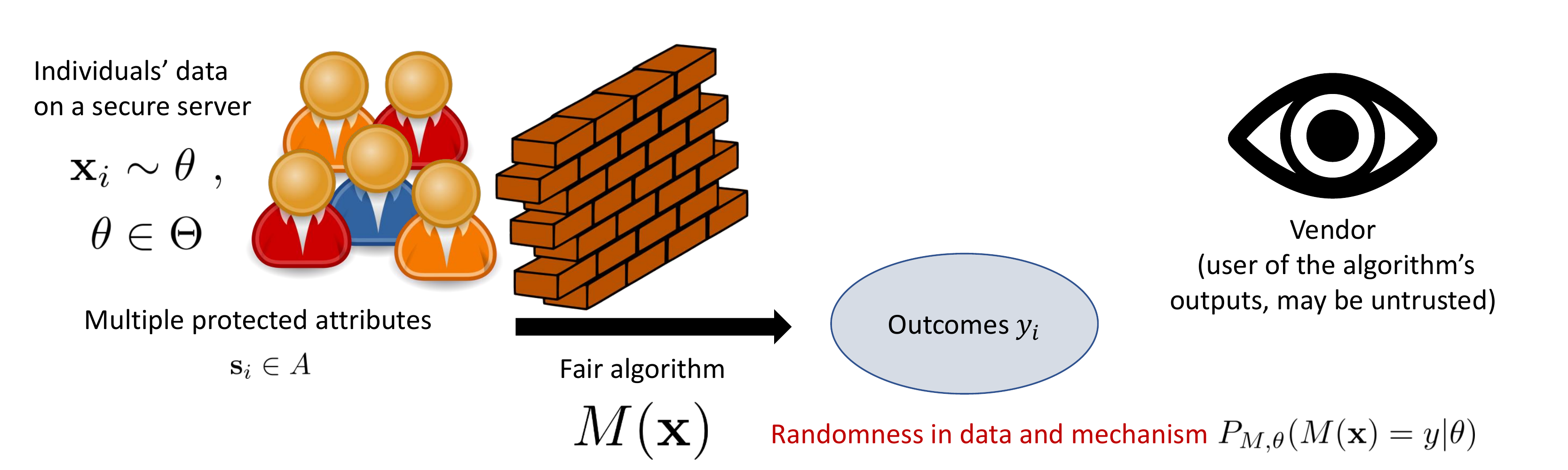}
	\caption{Diagram of the setting for the proposed differential fairness criterion. \label{fig:DFsetting}}
\end{figure*}

\begin{definition} (Statistical Parity Subgroup Fairness \cite{kearns2018preventing})
    Let $\mathcal{G}$ be a collection of protected group indicators $g: A \rightarrow \{0, 1\}$, where $g(\mathbf{s}) = 1$ designates that an individual with protected attributes $\mathbf{s}$ is in group $g$.  Assume that the classification mechanism $M(\mathbf{x})$ is binary, i.e. $y \in \{0,1\}$.
    
    Then $M(\mathbf{x})$ is $\gamma$-\emph{statistical parity subgroup fair} with respect to $\theta$ and $\mathcal{G}$ if for every $g \in \mathcal{G}$,
    \begin{align}
       & | P_{M,\theta}(M(\mathbf{x}) = 1) - P_{M,\theta}(M(\mathbf{x}) = 1 | g(\mathbf{s}) = 1) | \nonumber \\
       & \times P_\theta(g(\mathbf{s}) = 1) \leq \gamma \mbox{ .} \label{eqn:SF}
    \end{align}
\end{definition}
Note that $\gamma \in [0,1]$, smaller is better.  The first term penalizes a difference between the probability of the \emph{positive} class label for group $g$, and the population average of this probability.  The term $P_\theta(g(\mathbf{s}) = 1)$ weights the penalty by the size of group $g$ as a proportion of the population.  \emph{Statistical parity subgroup fairness} (\emph{SF}) is a multi-attribute definition satisfying criterion \ref{criterion:multiAtt}.  To satisfy \ref{criterion:intersect} and \ref{criterion:individualAtt}, $\mathcal{G}$ can be \emph{all} intersectional subgroups (e.g. \emph{Black women}) and top-level groups (e.g. \emph{men}).  The first term in Equation \ref{eqn:SF}, which encourages similar outcomes between groups, enforces criterion \ref{criterion:oppress}.

\begin{figure}[t]
		\centerline{\includegraphics[width=0.45\textwidth]{./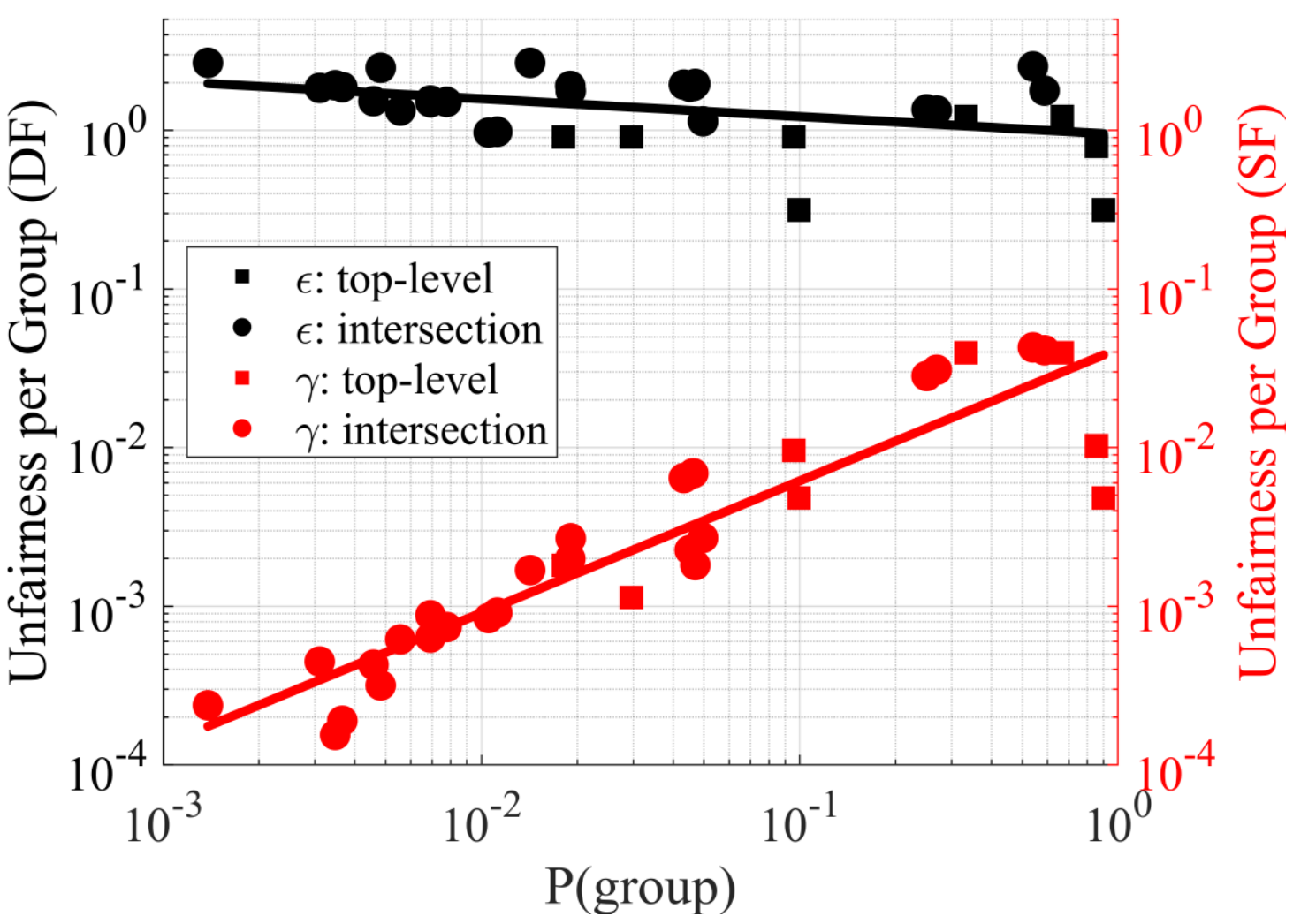}}
		\caption{\small ``Per-group'' $\gamma$-SF and our proposed $\epsilon$-DF, vs probability (i.e. size) of groups, Adult dataset. Circles: intersectional subgroups (e.g. Black women of USA). Squares: top-level groups (e.g. men). }
		\label{fig:compareWithSFdata}
\end{figure}

From an intersectional perspective, one concern with SF is that it does not satisfy criterion \ref{criterion:minority}, the protection of minority groups.  The term $P_\theta(g(\mathbf{s}) = 1)$ weights the ``per-group (un)fairness'' for each group $g$, i.e. Equation \ref{eqn:SF} applied to $g$ alone,  by its proportion of the population, thereby \emph{specifically downweighting the consideration of minorities}.  In Figure \ref{fig:toyExample}, we show an example where varying the size of a minority group $P(\mbox{minority})$ drastically alters $\gamma$-subgroup fairness, which finds that \textbf{a rather extreme scenario is more acceptable when the minority group is small}.  Our proposed criterion, $\epsilon$-DF (introduced in Section \ref{sec:DF}), is constant in $P(\mbox{minority})$.

Figure \ref{fig:compareWithSFdata} reports ``per-group'' $\gamma$'s on the UCI Adult census dataset, i.e. Equation \ref{eqn:SF} applied separately to each group, empirically seen have an increasing relationship with $P(\mbox{group})$.  The final $\gamma$-SF is determined by the worst case of the per-group $\gamma$'s. \emph{A small minority group thereby will most likely not directly affect} $\gamma$-SF, since the downweighting makes it unlikely to be the ``most unfair'' group.  

Kearns et al. \cite{kearns2018preventing} justify the use of the $P_\theta(g(\mathbf{s}) = 1)$ term via statistical considerations, as it is useful to prove generalization guarantees to extrapolate from empirical estimates of $\gamma$ (see Section \ref{sec:generalization}).  From a different ethical perspective, total utilitarianism, increasing the utility (i.e. reducing unfairness) for a large group of individuals at the expense of smaller groups could also be justified by the increase in the total utility of the population.  The problem with total utilitarianism, of course, is that it admits a scenario where many people possess low utility. We do not intend to dismiss SF as a valid notion of fairness.  Our claim here, rather, is simply that due to its treatment of minority groups, SF does not fully encapsulate the principles of fairness advocated by intersectional feminist scholars and activists \cite{collins2002black,crenshaw1989demarginalizing,hooks1981ain,lorde1984age,truth1851aint}.

Other candidate multi-attribute fairness definitions include \emph{false positive subgroup fairness} \cite{kearns2018preventing} and \emph{multicalibration} \cite{hebert-johnson2018multicalibration}.  These definitions are similar to SF, but they concern false-positive rates and calibration of prediction probabilities, respectively.  Since they focus on reliability of estimation rather than allocation of outcomes, they do not directly address criterion \ref{criterion:oppress}, and so are weaker definitions from a civil rights/feminist perspective.  This does not preclude their use for intersectional fairness scenarios in which harms are caused by incorrect predictions, rather than unfair outcome assignments; indeed, this is the type of approach \cite{buolamwini2018gender} take for studying intersectional fairness in computer vision applications.  Nevertheless, we will not consider them further here.

\section{Differential Fairness (DF) Measure}
\label{sec:DF}

\begin{figure*}[t]
\hspace{-1cm}
	\includegraphics[width=\textwidth]{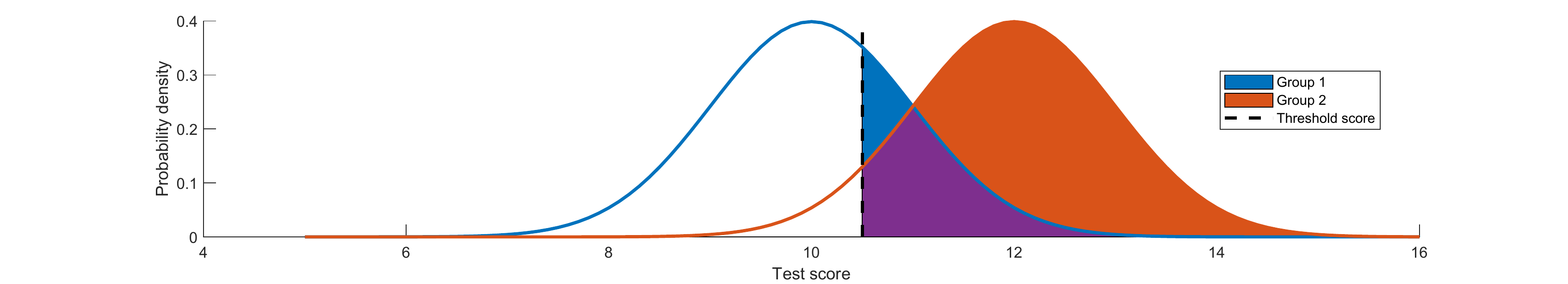}
	\begin{minipage}{0.49\textwidth}
		\centering
		\small 
		\begin{tabular}{@{}llll@{}}
			\toprule
			\multicolumn{4}{l}{Probability of Hiring Outcome Given Group}\\
			\midrule
			&    & \multicolumn{2}{c}{Group}\\
			\cmidrule{3-4}
			&    & 1 & 2\\
			\midrule
			\multirow{2}{*}{Outcome} &   yes     & 0.3085   & 0.9332 \\
			&   no      & 0.6915  & 0.0668\\
			\bottomrule
		\end{tabular}
	\end{minipage}	
	\begin{minipage}{0.49\textwidth}
		\centering
		\small 
		\begin{tabular}{@{}lllrl@{}}
			\toprule
			\multicolumn{4}{l}{Log Ratios of Probabilities} & \\
			\cmidrule{1-4}
			$y$ & $\mathbf{s}_i$ & $\mathbf{s}_j$ & \multicolumn{2}{l}{$ \log \frac{P_{M, \theta}(M(\mathbf{x}) = y|\mathbf{s}_i, \theta)}{P_{M, \theta}(M(\mathbf{x}) = y|\mathbf{s}_j, \theta)}$}\\
			\midrule
			\multirow{2}{*}{no} & 1 & 2 & 2.337 & \\
			& 2 & 1 & -2.337 & \\
			\multirow{2}{*}{yes} & 1 & 2 & -1.107 & \\
			& 2 & 1 & 1.107 & \\
			\bottomrule
		\end{tabular}
	\end{minipage}
	\caption{Worked example of differential fairness from Section \ref{sec:workedExample}. The calculations above show that $\epsilon = 2.337$.\label{fig:workedExample}}	
\end{figure*}


We now introduce our proposed fairness measures which satisfy our intersectionality criteria from Section \ref{sec:intersectionality}.  Note that there are multiple conceivable fairness definitions which satisfy these criteria.  For example, SF could be adapted to address criterion \ref{criterion:minority} by simply dropping the $P_\theta(g(\mathbf{s}) = 1)$ term,  at the loss of its associated generalization guarantees.  We instead select an alternative formulation, which is similar to this approach in spirit, but which has additional beneficial properties from a societal perspective regarding the \emph{law}, \emph{privacy}, and \emph{economics}, as we shall discuss below.  Our formalism has a particularly elegant \emph{intersectionality property}, in that Criterion \ref{criterion:individualAtt} (protecting higher-level groups) follows automatically from Criterion \ref{criterion:intersect} (protecting intersectional subgroups).

We motivate our criteria from a legal perspective.  Consider the 80\% rule, established in the Code of Federal Regulations \cite{eeoc1966guidelines} as a guideline for establishing disparate impact in violation of anti-discrimination laws such as Title VII of the Civil Rights Act of 1964.  The 80\% rule states that there is legal evidence of adverse impact if the ratio of probabilities of a particular favorable outcome, taken between a disadvantaged and an advantaged group, is less than 0.8:
\begin{align}
\small
    P(M(\mathbf{x}) = 1| \mbox{group A}) / P(M(\mathbf{x}) = 1| \mbox{group B}) < 0.8 \mbox{ .} 
\end{align}

Our first proposed criterion, which we call \textbf{differential fairness (DF)}, extends the 80\% rule to protect multi-dimensional intersectional categories, with respect to multiple output values.  We similarly restrict ratios of outcome probabilities between groups, but instead of using a predetermined fairness threshold at 80\%, we measure fairness on a sliding scale that can be interpreted similarly to that of \emph{differential privacy}, a definition of privacy for data-driven algorithms \cite{dwork2006calibrating}.
Differential fairness measures the \textbf{fairness cost} of mechanism $M(\mathbf{x})$ with a parameter $\epsilon$.
\begin{definition}
	A mechanism $M(\mathbf{x})$ is $\epsilon$-\emph{differentially fair (DF)} with respect to $(A, \Theta)$ if for all $\theta \in \Theta$ with $\mathbf{x} \sim \theta$, and $y \in \mbox{Range}(M)$,
	\begin{equation}
	e^{-\epsilon} \leq \frac{P_{M, \theta}(M(\mathbf{x}) = y|\mathbf{s}_i, \theta)}{P_{M, \theta}(M(\mathbf{x}) = y|\mathbf{s}_j, \theta)}\leq e^\epsilon \mbox{ ,} \label{eqn:DF}
	\end{equation}
	for all   $(\mathbf{s}_i, \mathbf{s}_j) \in A \times A$ where $P(\mathbf{s}_i|\theta) > 0$, $P(\mathbf{s}_j|\theta) > 0$.
	\label{def:DF}
\end{definition}
In Equation \ref{eqn:DF}, $\mathbf{s}_i$, $\mathbf{s}_j \in A$ are tuples of \emph{all} protected attribute values, e.g. gender, race, and nationality, and $\Theta$ is a set of distributions $\theta$ which could plausibly generate each instance $\mathbf{x}$.\footnote{The possibility of multiple $\theta \in \Theta$ is valuable from a privacy perspective, where $\Theta$ is the set of \emph{possible beliefs} that an adversary may have about the data, and is motivated by the work of \cite{kifer2014pufferfish}.    Continuous protected attributes are also possible, in which case sums are replaced by integrals in our proofs.} For example, $\Theta$ could be the set of Gaussian distributions over credit scores per value of the protected attributes, with mean and standard deviation in a certain range.

This is an intuitive \textbf{intersectional definition of fairness}: \emph{regardless of the combination of protected attributes, the probabilities of the outcomes will be similar}, as measured by the ratios versus other possible values of those variables, for small values of $\epsilon$.  For example, the probability of being given a loan would be similar regardless of a protected group's intersecting combination of gender, race, and nationality, marginalizing over the remaining attributes in $\mathbf{x}$.  If the probabilities are always equal, then $\epsilon = 0$, otherwise $\epsilon > 0$.  We have arrived at our criterion based on the 80\% rule, but it can also be derived as a special case of \emph{pufferfish} \cite{kifer2014pufferfish}, a generalization of differential privacy \cite{dwork2013algorithmic} which uses a variation of Equation \ref{eqn:DF} to hide the values of an arbitrary set of secrets.
\begin{definition}
	A mechanism $M(\mathbf{x})$ is $\epsilon$-\emph{pufferfish private} \cite{kifer2014pufferfish} in a framework $(S, Q, \Theta)$ if for all $\theta \in \Theta$ with $\mathbf{x} \sim \theta$, for all secret pairs $(\mathbf{s}_i,\mathbf{s}_j) \in Q$ and $y \in \mbox{Range}(M)$,
	\begin{equation}
	e^{-\epsilon} \leq \frac{P_{M, \theta}(M(\mathbf{x}) = y|\mathbf{s}_i, \theta)}{P_{M, \theta}(M(\mathbf{x}) = y|\mathbf{s}_j, \theta)}\leq e^\epsilon \mbox{ ,} \label{def:pufferfish}
	\end{equation}
	when $\mathbf{s}_i$ and $\mathbf{s}_j$ are such that  $P(\mathbf{s}_i|\theta) > 0$, $P(\mathbf{s}_j|\theta) > 0$.
\end{definition}
Differential fairness adapts pufferfish to the task of defining algorithmic fairness, by selecting a set of protected attributes as the secrets, and ensuring that the values of these attributes are indistinguishable.  Thus, differential fairness provides a closely related privacy guarantee to differential privacy.

If $P_{M,\theta}$ is unknown, it can be estimated using the empirical distribution, or via a probabilistic model of the data.  Assuming discrete outcomes, $P_{Data}(y|\mathbf{s}) = \frac{N_{y,\mathbf{s}}}{N_{\mathbf{s}}}$, where $N_{y,\mathbf{s}}$ and $N_{\mathbf{s}}$ are empirical counts of their subscripted values in the dataset $D$.  \textbf{Empirical differential fairness (EDF)} corresponds to verifying that for any $y$, $\mathbf{s}_i$, $\mathbf{s}_j$, we have
\begin{equation}
e^{-\epsilon} \leq \frac{N_{y,\mathbf{s}_i}}{N_{\mathbf{s}_i}}\frac{N_{\mathbf{s}_j}}{N_{y,\mathbf{s}_j}}\leq e^\epsilon  \mbox{ ,} \label{eqn:discreteDataEDF}
\end{equation}
Alternatively, if we estimate $\epsilon$-$DF$ via the posterior predictive distribution of a Dirichlet-multinomial model, the criterion for any $y$, $\mathbf{s}_i$, $\mathbf{s}_j$ becomes
\begin{equation}
e^{-\epsilon} \leq \frac{N_{y,\mathbf{s}_i} + \alpha}{N_{\mathbf{s}_i}  + |\mathcal{Y}|\alpha}\frac{N_{\mathbf{s}_j} + |\mathcal{Y}|\alpha}{N_{y,\mathbf{s}_j} + \alpha}\leq e^\epsilon \mbox{ ,} \label{eqn:smoothedFairness}
\end{equation}
where scalar $\alpha$ is each entry of the parameter of a symmetric Dirichlet prior with concentration parameter $|\mathcal{Y}|\alpha$, $\mathcal{Y} = \mbox{Range}(M)$. We refer to this as \textbf{smoothed EDF}. 

Note that EDF and smoothed EDF methods can sometimes be unstable in extreme cases when nearly all instances are assigned to the same class. To address this issue, instead of using empirical hard counts per group $N_{y,s}$, we can also use \emph{soft counts} for (smoothed) EDF, based on a probabilistic classifier's predicted $P(y|\mathbf{x})$, as follows:
\begin{equation}
e^{-\epsilon} \leq \frac{\sum_{\mathbf{x} \in D: A = \mathbf{s}_i} P(y|\mathbf{x}) + \alpha}{N_{\mathbf{s}_i}  + |\mathcal{Y}|\alpha}\frac{N_{\mathbf{s}_j} + |\mathcal{Y}|\alpha}{\sum_{\mathbf{x}  \in D: A = \mathbf{s}_j} P(y|\mathbf{x}) + \alpha}\leq e^\epsilon \mbox{ .} 
\label{eqn:smoothedFairnessSoft}
\end{equation}

\section{DF Bias Amplification Measure}
We can adapt DF to measure fairness in data, i.e.  outcomes assigned by a black-box algorithm or social process, by using (a model of) the data's generative process as the mechanism.
\begin{definition}
	A labeled dataset $D = \{(\mathbf{x}_1,y_1),\ldots, (\mathbf{x}_N,y_N)\}$ is $\epsilon$-\emph{differentially fair (DF)} in $A$ with respect to model $P_{Model}(\mathbf{x},y)$ if mechanism $M(\mathbf{x}) = y \sim P_{Model}(y|\mathbf{x})$ is $\epsilon$-\emph{differentially fair} with respect to $(A, \{P_{Model}(\mathbf{x})\})$, for $P_{Model}$ trained on the dataset.
	\label{def:DFdata}
\end{definition}


Similarly to differential privacy, differences $\epsilon_2 - \epsilon_1$ between two mechanisms $M_2(\mathbf{x})$ and $M_1(\mathbf{x})$ are meaningful (for fixed $A$ and $\Theta$, and for tightly computed minimum values of $\epsilon$), and measure the additional ``fairness cost'' of using one mechanism instead of the other.  When $\epsilon_1$ is the differential fairness of a labeled dataset and $\epsilon_2$ is the differential fairness of a classifier measured on the same dataset, $\epsilon_2 - \epsilon_1$ is a measure of the extent to which the classifier increases the unfairness over the original data, a phenomenon that \cite{zhao2017men} refer to as \emph{bias amplification}.
\begin{definition}
	A mechanism $M(\mathbf{x})$ satisfies $(\epsilon_2 - \epsilon_1)$-\emph{DF bias amplification} with respect to $(A, \Theta, D, \mathcal{M})$ if it is $\epsilon_2$-DF and $D$ is a labeled dataset which is $\epsilon_1$-DF with respect to model $\mathcal{M}$.
	\label{def:biasAmplification}
\end{definition}

Politically speaking, $\epsilon$-$DF$ is a relatively progressive notion of fairness which we have motivated based on intersectionality (\emph{disparities in societal outcomes are largely due to systems of oppression}), and which is reminiscent of demographic parity \cite{dwork2012fairness}.  On the other hand, $(\epsilon_2 - \epsilon_1)$-$DF$ \textbf{bias amplification} is a more \textbf{politically conservative fairness metric} which does not seek to correct unfairness in the original dataset (i.e. it relaxes criterion \ref{criterion:oppress}), in line with the principle of \textbf{infra-marginality} (\emph{a system is biased only if disparities in its behavior are worse than those in society}) \cite{simoiu2017problem}.  Informally, $\epsilon_2$-$DF$ and $(\epsilon_2 - \epsilon_1)$-$DF$ bias amplification represent ``upper and lower bounds'' on the unfairness of the system in the case where the relative effect of structural oppression on outcomes is unknown.

\section{Illustrative Worked Examples}
\label{sec:workedExample}
A simple worked example of differential fairness is given in Figure \ref{fig:workedExample}.  In the example, given an applicant's score $x$ on a standardized test, the mechanism $M(x) = x \geq t$ approves the hiring of a job applicant if their test score $x \geq t$, with $t = 10.5$.  The scores are distributed according to $\theta$, which corresponds to the following process. The applicant's protected group is 1 or 2 with probability 0.5.  Test scores for group 1 are normally distributed $N(x; \mu_1 = 10, \sigma = 1)$, and for group 2 are distributed $N(x; \mu_2 = 12, \sigma = 1)$.  In the figure, the group-conditional densities are plotted on the top, along with the threshold for the hiring outcome being \emph{yes} (i.e. $M(x) = 1$).  Shaded areas indicate the probability of a \emph{yes} hiring decision for each group (overlap in purple).  On the bottom, the calculations show that $M(x)$ is $\epsilon$-differentially fair for $\epsilon = 2.337$.  This means that the probability ratios are bounded within the range $(e^{-\epsilon}, e^\epsilon) = (0.0966, 10.35)$, i.e. one group has around 10 times the probability of some particular hiring outcome than the other ($y = $ \emph{no}).  Under the presumption that the two groups are roughly equally capable of performing the job overall, this is clearly unsatisfactory in terms of fairness.

The \emph{intersectional} setting, in which there are multiple protected variables, is specifically addressed by differential fairness, by considering the probabilities of outcomes for each intersection of the set of protected variables.
\begin{table}[t]
	\centering
	\small
	\begin{tabular}{@{}lllll@{}}
		\toprule
		\multicolumn{5}{l}{Probability of Being Admitted to University X}\\
		\midrule
		&    & \multicolumn{2}{c}{Gender}&\\
		\cmidrule{3-4}
		&    & A & B & Overall\\
		\midrule
		\multirow{2}{*}{Race} &   1     & $\frac{81}{87} \ (0.931)$   & $\frac{234}{270} \ (0.867)$ & $\frac{315}{357} \ (0.882)$ \vspace{0.1cm}\\
		&   2      & $\frac{192}{263} \ (0.730)$  & $\frac{55}{80} \ (0.688)$ & $\frac{247}{343} \ (0.720)$\\
		\cmidrule{2-4}
		& Overall & $\frac{273}{350} \ (0.780)$ & $\frac{289}{350} \ (0.826)$ &\\
		\bottomrule
	\end{tabular}
	\caption{Intersectional example: Simpson's paradox. \label{tab:simpsonsExample}}	
\end{table}
%
We illustrate this setting with an example on admissions of prospective students to a particular University X.  In the scenario, the protected attributes are gender and race, and the mechanism is the  admissions process, with a binary outcome.  
Our data, shown in Table \ref{tab:simpsonsExample}, is adapted from a real-world scenario involving treatments for kidney stones, often used to demonstrate Simpson's paradox \cite{charig1986comparison, julious1994confounding}. Here, the ``paradox'' is that for race 1, individuals of gender A are more likely to be admitted than those of gender B, and for race 2, those of gender A are also more likely to be admitted than those of gender B, yet counter-intuitively, gender B is more likely to be admitted overall.  

Since the admissions process is a black box, we model it using Equation \ref{eqn:discreteDataEDF}, empirical differential fairness (EDF).  By calculating the log probability ratios of $(Gender, Race)$ pairs from Table \ref{tab:simpsonsExample}, as well as for the pairs of probabilities for the declined admission outcome ($1 - P(\mbox{admit})$), and plugging them into Equation \ref{eqn:discreteDataEDF}, we see that the mechanism is $\epsilon=1.511$-DF with $A = Gender \times Race$.  By calculating $\epsilon$ using the admission probabilities in the \emph{Overall} row ($Gender$) and the \emph{Overall} column ($Race$), we find that $\epsilon = 0.2329$ for $A = Gender$, and $\epsilon = 0.8667$ for $A = Race$.  We will prove in Theorem \ref{thm:fullIntersectionality}  that $\epsilon$ with $A = Gender \times Race$ is an upper bound on
$\epsilon$-DF for $A = Gender$ and for $A = Race$.  Thus, even with a ``Simpson's reversal'' differential (un)fairness will not increase after summing out a protected attribute.

\section{Dealing with Confounder Variables}
\label{sec:confounders}
\begin{figure}
\centering
\begin{tikzpicture}
\node[obs] (Y) {$Y$} ;
\node[latent, above=of Y] (U) {\tiny Potential} ;
\node[latent, right=of U] (C) {\tiny Confounders} ; 
\node[obs, above=of C] (A) {$A$} ;	

\edge {U} {Y}
\edge {A} {C}
\edge {C} {Y}

\plate {interPeople} {(A)(Y)(U)(C)} {$N$}
\plate {interAtts} {(A)} {$D$}
\end{tikzpicture}
\caption{Ideal-world intersectional fairness but with counfounder variables present.  Disparity in overall outcomes between protected groups may occur.\label{fig:confounders}}
\end{figure}
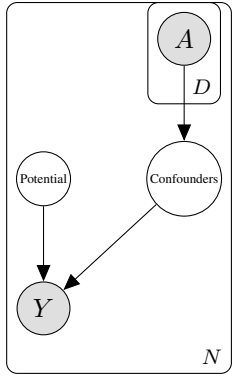

As we have seen, differential fairness can be used to measure the inequity between the outcome probabilities for the protected groups and their intersections at different levels of measurement granularity, although it does not determine whether the inequities were due to systemic factors and/or discrimination. In the case study above, a confounding variable which could explain the Simpson's reversal is the decision of the prospective student on whether to apply to University X. 
The $\epsilon$-DF criterion is appropriate when the differences are believed to be due to systems of oppression, as posited by intersectionality theory, and such confounder variables are not present.  With confounders, parity in outcomes between intersectional protected groups, which $\epsilon$-DF rewards, may no longer be desirable (see Figure \ref{fig:confounders}). We propose an alternative fairness definition for when known confounders are present. 
\begin{definition}
	Let $\theta \in \Theta$ be distributions over $(\mathbf{x}, c)$, where $c \in C$ are confounder variables. A mechanism $M(\mathbf{x})$ is $\epsilon$-\emph{differentially fair with confounders (DFC)} with respect to $(A, \Theta, C)$, if for all $c \in C$, $M(\mathbf{x})$ is $\epsilon$-DF with respect to $(A, \Theta_{|c})$, where  $\Theta_{|c} = \{P(\mathbf{x}|\theta, c) | \theta \in \Theta\}$.
	\label{def:DFC}
\end{definition}
In the university admissions case, Definition \ref{def:DFC} penalizes disparity in admissions at the department level, and the most unfair department determines the overall unfairness $\epsilon$-DFC.
\begin{theorem}
	\label{thm:confounders}
	Let $M$ be an $\epsilon$-DFC mechanism in $(A, \Theta,C)$, Then $M$ is $\epsilon$-\emph{differentially fair} in $(A, \Theta)$.
\end{theorem}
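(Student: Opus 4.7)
The plan is to fix an arbitrary $\theta \in \Theta$, an outcome $y \in \mathrm{Range}(M)$, and two protected tuples $\mathbf{s}_i, \mathbf{s}_j \in A$ with $P(\mathbf{s}_i\mid\theta), P(\mathbf{s}_j\mid\theta) > 0$, and to show that $e^{-\epsilon} \leq P_{M,\theta}(M(\mathbf{x})=y \mid \mathbf{s}_i, \theta)/P_{M,\theta}(M(\mathbf{x})=y \mid \mathbf{s}_j, \theta) \leq e^\epsilon$, which is exactly the $\epsilon$-DF condition with respect to $(A,\Theta)$ of Definition~\ref{def:DF}. Because each $\theta \in \Theta$ is a joint distribution over $(\mathbf{x},c)$, the natural move is to condition on the confounder $c\in C$, apply the $\epsilon$-DFC hypothesis to each conditional slice, and then reassemble a marginal statement by integrating (summing) over $c$.

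Concretely, I would begin by writing the marginal conditional probability via total probability,
\[
P_{M,\theta}(M(\mathbf{x})=y \mid \mathbf{s}_i,\theta) \;=\; \sum_{c \in C} P_{M,\theta}(M(\mathbf{x})=y \mid \mathbf{s}_i,c,\theta)\,P(c \mid \mathbf{s}_i,\theta),
\]
and identify the inner conditional as the quantity controlled by $\epsilon$-DFC, since $P(\mathbf{x}\mid\theta,c)\in\Theta_{\mid c}$ by the construction in Definition~\ref{def:DFC}. For each $c$, that definition supplies $P(M{=}y\mid\mathbf{s}_i,c,\theta) \leq e^{\epsilon}\,P(M{=}y\mid\mathbf{s}_j,c,\theta)$ and the symmetric lower bound. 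Substituting the upper bound into the display yields $P_{M,\theta}(M{=}y\mid\mathbf{s}_i,\theta) \leq e^{\epsilon} \sum_{c} P(M{=}y\mid\mathbf{s}_j,c,\theta)\,P(c\mid\mathbf{s}_i,\theta)$, with the analogous lower-bound inequality.

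The hard part will be converting that right-hand side into $e^\epsilon\,P_{M,\theta}(M{=}y\mid\mathbf{s}_j,\theta)$, because the target quantity weights the same inner conditionals by $P(c\mid\mathbf{s}_j,\theta)$ rather than $P(c\mid\mathbf{s}_i,\theta)$. The two weightings coincide precisely when the confounder is distributed identically across the protected groups under $\theta$, and then the argument closes in a single line; without such an equality, Simpson's-paradox configurations of the kind highlighted by Table~\ref{tab:simpsonsExample} can drive the marginal ratio above $e^\epsilon$ even while every conditional ratio sits in $[e^{-\epsilon},e^\epsilon]$. I would therefore focus my effort on identifying the implicit structural condition the theorem relies on, most naturally an assumption that each admissible $\theta\in\Theta$ renders $c$ marginally independent of the protected attributes, so that $P(c\mid\mathbf{s}_i,\theta)=P(c\mid\mathbf{s}_j,\theta)$ for every $c$; with that in hand, the two bounds above collapse into the desired two-sided inequality and the claim follows immediately.
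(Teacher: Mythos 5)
Your decomposition is exactly the one the paper uses: expand $P_{M,\theta}(M(\mathbf{x})=y\mid\mathbf{s},\theta)$ by total probability over $c$ and apply the $\epsilon$-DFC hypothesis slice by slice. The ``hard part'' you isolate --- that this yields the upper bound $e^{\epsilon}\sum_{c}P_{M,\theta}(M(\mathbf{x})=y\mid\mathbf{s}_j,c,\theta)\,P_{\theta}(c\mid\mathbf{s}_i,\theta)$, which is weighted by $P_{\theta}(c\mid\mathbf{s}_i,\theta)$ rather than $P_{\theta}(c\mid\mathbf{s}_j,\theta)$ and hence is not $e^{\epsilon}P_{M,\theta}(M(\mathbf{x})=y\mid\mathbf{s}_j,\theta)$ --- is precisely where the paper's own proof goes wrong. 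Abbreviating $P(y\mid\cdot)$ for $P_{M,\theta}(M(\mathbf{x})=y\mid\cdot\,,\theta)$, the paper asserts the identity
\begin{align}
\frac{\sum_{c} P(y\mid\mathbf{s}_i,c)\,P(c\mid\mathbf{s}_i)}{\sum_{c} P(y\mid\mathbf{s}_j,c)\,P(c\mid\mathbf{s}_j)}
\;=\;
\frac{\sum_{c} \frac{P(y\mid\mathbf{s}_i,c)}{P(y\mid\mathbf{s}_j,c)}\,P(c\mid\mathbf{s}_i)}{\sum_{c} \frac{P(y\mid\mathbf{s}_j,c)}{P(y\mid\mathbf{s}_j,c)}\,P(c\mid\mathbf{s}_j)} \mbox{ ,}
\end{align}
i.e., it divides each term of the numerator sum and each term of the denominator sum by the $c$-dependent factor $P(y\mid\mathbf{s}_j,c)$. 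A ratio of sums is not preserved under term-wise division by a non-constant factor, so this ``equality'' is false in general; the rest of the paper's chain only bounds the quantity $\sum_{c}\frac{P(y\mid\mathbf{s}_i,c)}{P(y\mid\mathbf{s}_j,c)}P(c\mid\mathbf{s}_i)$, which is not the marginal ratio that Definition~\ref{def:DF} requires.

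So your hesitation is a correct diagnosis, not a defect of your attempt: the theorem as stated is false without an additional hypothesis. Concretely, let $C=\{1,2\}$ with $P(y{=}1\mid\mathbf{s},c{=}1)=0.8$ and $P(y{=}1\mid\mathbf{s},c{=}2)=0.2$ for \emph{every} $\mathbf{s}$ (so $M$ is $0$-DFC), but $P(c{=}1\mid\mathbf{s}_i)=0.9$ and $P(c{=}1\mid\mathbf{s}_j)=0.1$; then $P(y{=}1\mid\mathbf{s}_i)=0.74$ versus $P(y{=}1\mid\mathbf{s}_j)=0.26$, a ratio near $2.85$, which violates $0$-DF. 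This is exactly the Berkeley-admissions configuration that motivates Definition~\ref{def:DFC} in the first place. Your proposed repair --- assuming $P(c\mid\mathbf{s}_i,\theta)=P(c\mid\mathbf{s}_j,\theta)$ for all $c$ and all admissible $\theta$ --- does close the argument in one line, but it contradicts the paper's own causal picture (Figure~\ref{fig:confounders} draws an edge from $A$ to the confounders), and under it the motivating Simpson's-reversal scenarios are excluded by fiat. The statement the paper's algebra actually supports is the weaker, confounder-standardized one: $e^{-\epsilon}\leq \sum_{c}P(y\mid\mathbf{s}_i,c)P(c\mid\mathbf{s}_i)\,/\,\sum_{c}P(y\mid\mathbf{s}_j,c)P(c\mid\mathbf{s}_i)\leq e^{\epsilon}$, where both groups are reweighted by a common confounder distribution.
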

From Theorem \ref{thm:confounders}, if we protect differential fairness per department, we obtain differential fairness and its corresponding theoretical economic and privacy guarantees in the University's overall admissions, bounded by the $\epsilon$ of the most unfair department, \emph{even in the case of a Simpson's reversal}.  A proof is given in the Appendix.  If confounder variables are latent, we can attempt to infer them probabilistically in order to apply DFC.  Alternatively, ($\epsilon_2 - \epsilon_1$)-DF bias amplification can still be used to study the impact of an algorithm on fairness. 

\section{Properties of Differential Fairness}
\label{sec:DFproperties}

We now discuss the theoretical properties of our definitions.
\subsection{Differential Fairness and Intersectionality}
Differential fairness explicitly encodes protection of intersectional groups (criterion \ref{criterion:intersect}).  For DF, we prove that this automatically implies fairness for \emph{each of the protected attributes individually} (criterion \ref{criterion:individualAtt}), and indeed, \emph{any subset} of the protected attributes.
For example, if a loan approval mechanism $M(\mathbf{x})$ is $\epsilon$-DF in $A = $ \emph{gender} $\times$ \emph{race} $\times$ \emph{nationality}, it is also $\epsilon$-DF in, e.g., $A = $ \emph{gender} by itself, or $A = $ \emph{gender} $\times$ \emph{nationality}.  In other words, by ensuring fairness at the intersection of gender, race, and nationality under our criterion, we also ensure the same degree of fairness between genders overall, and between gender/nationality pairs overall, and so on. In the above, $\epsilon$ is a worst case, and DF may also hold for lower values of $\epsilon$. 

\begin{lemma} \label{lem:rewriteDF} (Proof given in the Appendix.)
	The $\epsilon$-DF criterion can be rewritten as: for any $\theta \in \Theta$, $y \in \mbox{Range}(M)$, 
\begin{align}
 &\log \max_{\mathbf{s} \in A: P(\mathbf{s}|\theta) > 0} P_{M, \theta}(M(\mathbf{x}) = y|\mathbf{s}, \theta) \nonumber \\
 &- \log \min_{s  \in A: P(\mathbf{s}|\theta) > 0} P_{M, \theta}(M(\mathbf{x}) = y|\mathbf{s}, \theta) \leq \epsilon \mbox{ .}
\end{align}
\end{lemma}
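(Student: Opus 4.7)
The plan is to start from Definition \ref{def:DF} and simply take logs to convert the multiplicative two-sided bound into an additive one. Fix any $\theta \in \Theta$ and any $y \in \mbox{Range}(M)$, and let $\mathcal{A}_\theta = \{\mathbf{s} \in A : P(\mathbf{s}|\theta) > 0\}$ denote the support of $\theta$ on $A$. Taking $\log$ of the inequality in Equation \ref{eqn:DF} shows that $\epsilon$-DF is equivalent to the statement that for every pair $(\mathbf{s}_i, \mathbf{s}_j) \in \mathcal{A}_\theta \times \mathcal{A}_\theta$,
\begin{equation}
\bigl| \log P_{M,\theta}(M(\mathbf{x}) = y | \mathbf{s}_i, \theta) - \log P_{M,\theta}(M(\mathbf{x}) = y | \mathbf{s}_j, \theta) \bigr| \leq \epsilon \mbox{ .}
\end{equation}

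Next I would observe that a uniform bound of this form on the pairwise differences of a finite (or bounded) collection of real numbers is equivalent to a bound on the range, i.e. the difference between the maximum and minimum elements. Writing $f(\mathbf{s}) = \log P_{M,\theta}(M(\mathbf{x}) = y | \mathbf{s}, \theta)$, the ``$\Leftarrow$'' direction is immediate: if $\max_{\mathbf{s} \in \mathcal{A}_\theta} f(\mathbf{s}) - \min_{\mathbf{s} \in \mathcal{A}_\theta} f(\mathbf{s}) \leq \epsilon$, then for any particular pair $(\mathbf{s}_i, \mathbf{s}_j)$, we have $|f(\mathbf{s}_i) - f(\mathbf{s}_j)| \leq \max f - \min f \leq \epsilon$. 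The ``$\Rightarrow$'' direction follows by instantiating the pairwise bound at the extremizing pair $(\mathbf{s}_i^*, \mathbf{s}_j^*) = (\arg\max_{\mathbf{s}} f(\mathbf{s}), \arg\min_{\mathbf{s}} f(\mathbf{s}))$, which lies in $\mathcal{A}_\theta \times \mathcal{A}_\theta$.

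One subtlety to address is that the max and min may not be attained when $\mathcal{A}_\theta$ is infinite (e.g. under continuous protected attributes, as mentioned in the footnote after Definition \ref{def:DF}). The fix is to replace $\max$ and $\min$ with $\sup$ and $\inf$ and take approximating sequences; the uniform bound on pairwise differences still passes to the limit, so the equivalence is preserved. I do not expect any real obstacle here — the argument is essentially bookkeeping — but the only point requiring care is handling the quantification over $\theta$ and $y$, which I would handle by noting that both formulations quantify over the same $\theta$ and $y$ pointwise, so the equivalence above for each fixed $(\theta,y)$ lifts directly to the full criterion. Symmetry of the original two-sided bound ($e^{-\epsilon} \leq r \leq e^\epsilon$ is symmetric under swapping $\mathbf{s}_i$ and $\mathbf{s}_j$) is what allows the single max$-$min expression to capture both sides at once.
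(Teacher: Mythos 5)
Your proposal is correct and follows essentially the same route as the paper's proof: take logs of the two-sided ratio bound, rewrite it as a bound on the absolute pairwise differences of $\log P_{M,\theta}(M(\mathbf{x})=y|\mathbf{s},\theta)$, and observe that a uniform pairwise bound is equivalent to a bound on the range because the worst-case pair attains the max-minus-min. Your extra care about the sup/inf case for infinite $A$ and the explicit two-direction argument are refinements of, not departures from, the paper's argument, which simply notes that the worst-case bound is ``achievable and hence tight.''
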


\begin{theorem} \textbf{(Intersectionality Property)}
	\label{thm:fullIntersectionality}
	Let $M$ be an \\ $\epsilon$-\emph{differentially fair} mechanism in $(A, \Theta)$, $A =S_1 \times S_2 \times\ldots \times S_p$, and let $D = S_a \times \ldots \times S_k$ be the Cartesian product of a nonempty proper subset of the protected attributes included in $A$. Then $M$ is $\epsilon$-\emph{differentially fair} in $(D, \Theta)$.
\end{theorem}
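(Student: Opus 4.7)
The plan is to reduce fairness at the coarser granularity $D$ to fairness at the finer granularity $A$ by observing that each conditional distribution over outcomes given $\mathbf{d} \in D$ is a convex combination of conditional distributions given compatible full tuples $\mathbf{s} \in A$. I will work through Lemma \ref{lem:rewriteDF} rather than Definition \ref{def:DF} directly, since it recasts $\epsilon$-DF as a bound on the spread (in log space) of the outcome probability across conditioning values, and spreads interact cleanly with averaging.

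First I would fix an arbitrary $\theta \in \Theta$ and $y \in \mathrm{Range}(M)$, and take an arbitrary $\mathbf{d} \in D$ with $P(\mathbf{d} \mid \theta) > 0$. Writing $A = D \times E$ where $E$ is the Cartesian product of the ``dropped'' protected attributes, I would apply the law of total probability to obtain
\begin{equation}
P_{M,\theta}(M(\mathbf{x}) = y \mid \mathbf{d}, \theta) = \sum_{\mathbf{e} \in E} P_{M,\theta}(M(\mathbf{x}) = y \mid (\mathbf{d}, \mathbf{e}), \theta)\, P(\mathbf{e} \mid \mathbf{d}, \theta),
\end{equation}
where the sum is implicitly restricted to $\mathbf{e}$ with $P((\mathbf{d}, \mathbf{e}) \mid \theta) > 0$ (the other terms contribute nothing). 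The coefficients $P(\mathbf{e} \mid \mathbf{d}, \theta)$ are nonnegative and sum to one, so this is a convex combination of values $P_{M,\theta}(M(\mathbf{x}) = y \mid \mathbf{s}, \theta)$ taken at $\mathbf{s} = (\mathbf{d}, \mathbf{e}) \in A$ with $P(\mathbf{s} \mid \theta) > 0$.

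Next I would use the standard fact that a convex combination lies between its minimum and maximum term. This gives, for every such $\mathbf{d}$,
\begin{align}
\min_{\mathbf{s} \in A:\, P(\mathbf{s} \mid \theta) > 0} P_{M,\theta}(M(\mathbf{x}) = y \mid \mathbf{s}, \theta) &\leq P_{M,\theta}(M(\mathbf{x}) = y \mid \mathbf{d}, \theta) \nonumber \\
&\leq \max_{\mathbf{s} \in A:\, P(\mathbf{s} \mid \theta) > 0} P_{M,\theta}(M(\mathbf{x}) = y \mid \mathbf{s}, \theta).
\end{align}
Consequently the max over $\mathbf{d} \in D$ of $P_{M,\theta}(M(\mathbf{x}) = y \mid \mathbf{d}, \theta)$ is at most the max over $\mathbf{s} \in A$, and the min over $\mathbf{d}$ is at least the min over $\mathbf{s}$. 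Taking logs and subtracting, the spread over $D$ is therefore upper bounded by the spread over $A$, which by Lemma \ref{lem:rewriteDF} (applied to $M$ in $(A, \Theta)$) is at most $\epsilon$. Applying Lemma \ref{lem:rewriteDF} in the reverse direction for $(D, \Theta)$ then yields $\epsilon$-DF in $(D, \Theta)$, completing the argument.

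The routine but slightly delicate part will be verifying that the support conditions in Definition \ref{def:DF} line up: I need to ensure that whenever $P(\mathbf{d} \mid \theta) > 0$, the inner sum contains at least one $\mathbf{s} = (\mathbf{d}, \mathbf{e})$ with $P(\mathbf{s} \mid \theta) > 0$ (which is immediate, since $P(\mathbf{d} \mid \theta) = \sum_{\mathbf{e}} P((\mathbf{d}, \mathbf{e}) \mid \theta)$), and that the max/min over $A$ range only over positive-probability $\mathbf{s}$'s. Beyond this bookkeeping, no heavy machinery is required; the intersectionality property falls out directly from the fact that marginalization is averaging, and averaging shrinks range.
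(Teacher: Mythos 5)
Your proposal is correct and follows essentially the same route as the paper's own proof: both decompose $P_{M,\theta}(M(\mathbf{x})=y \mid \mathbf{d},\theta)$ via the law of total probability over the dropped attributes $E$, observe that this convex combination is sandwiched between the min and max over the full intersectional tuples in $A$, and conclude via Lemma \ref{lem:rewriteDF}. Your explicit attention to the positive-support bookkeeping is a welcome touch that the paper handles only implicitly.
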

\begin{proof}
Define $E = S_1\times \ldots\times S_{a-1}\times S_{a + 1}\ldots \times S_{k-1}\times S_{k + 1}\times \ldots \times S_p$, the Cartesian product of the protected attributes included in $A$ but not in $D$.  Then for any  $\theta \in \Theta$, $y \in \mbox{Range}(M)$, 

    \begin{align}
    &\log \max_{\mathbf{s} \in D: P(\mathbf{s}|\theta) > 0} P_{M, \theta}(M(\mathbf{x}) = y|D = s, \theta) \nonumber \\ 
    =& \log  \max_{\mathbf{s} \in D: P(\mathbf{s}|\theta) > 0} \sum_{e \in E} P_{M, \theta}(M(\mathbf{x}) = y|E=e, \mathbf{s}, \theta) P_{\theta}(E=e|\mathbf{s},\theta) \nonumber \\
    \leq & \log  \max_{\mathbf{s} \in D: P(\mathbf{s}|\theta) > 0} \sum_{e \in E} \max_{e' \in E: P_{\theta}(E=e'|\mathbf{s},\theta) > 0}  \nonumber \\
    &\ \ \ \ \ \ \ \ \ \ \ \big ( P_{M, \theta}(M(\mathbf{x}) = y|E=e', \mathbf{s}, \theta) \big ) \times P_{\theta}(E=e|\mathbf{s},\theta)  \nonumber \\
    = & \log  \max_{\mathbf{s} \in D: P(\mathbf{s}|\theta) > 0} \max_{e' \in E: P_{\theta}(E=e'|\mathbf{s},\theta) > 0}P_{M, \theta}(M(\mathbf{x}) = y|E=e', \mathbf{s}, \theta) \nonumber \\
    =& \log \max_{\mathbf{s}' \in A: P(\mathbf{s}'|\theta) > 0} P_{M, \theta}(M(\mathbf{x}) = y|\mathbf{s}', \theta) \nonumber
    \end{align}
    By a similar argument, $\log \min_{\mathbf{s}  \in D: P(\mathbf{s}|\theta) > 0} P_{M, \theta}(M(\mathbf{x}) = y|D = \mathbf{s}, \theta)  \geq \log \min_{\mathbf{s}'  \in A: P(\mathbf{s}'|\theta) > 0} P_{M, \theta}(M(\mathbf{x}) = y|\mathbf{s}', \theta)$.  
    Applying Lemma \ref{lem:rewriteDF}, we hence bound $\epsilon$ in  $(D, \Theta)$ as
    \begin{align}
    &\log \max_{\mathbf{s} \in D: P(\mathbf{s}|\theta) > 0} P_{M, \theta}(M(\mathbf{x}) = y|D = \mathbf{s}, \theta) \nonumber \\ 
    &- \log \min_{\mathbf{s} \in D: P(\mathbf{s}|\theta) > 0} P_{M, \theta}(M(\mathbf{x}) = y|D = \mathbf{s}, \theta) \nonumber \\
    \leq & \log \max_{\mathbf{s}' \in A: P(\mathbf{s}'|\theta) > 0} P_{M, \theta}(M(\mathbf{x}) = y|\mathbf{s}', \theta) \nonumber \\
    &- \log \min_{\mathbf{s}'  \in A: P(\mathbf{s}'|\theta) > 0} P_{M, \theta}(M(\mathbf{x}) = y|\mathbf{s}', \theta) \leq \ \epsilon \mbox{ .}
    \end{align}
    
\end{proof}
This property is philosophically concordant with intersectionality, which emphasizes empathy with all overlapping marginalized groups. However, its benefits are mainly practical: in principle, one could protect all higher-level groups in SF by specifying $\sum_{j=1}^p \binom{p}{j} K^j$ binary indicator protected groups, where $K$ is the number of values per protected attribute.  This quickly becomes computationally and statistically infeasible. 
For example, Figure \ref{fig:countIntersectionalitySavings} counts the number of protected groups that must be explicitly considered under the two intersectional fairness definitions, in order to respect the intersectional fairness criteria \ref{criterion:intersect} and \ref{criterion:individualAtt}.  The intersectionality property (Theorem \ref{thm:fullIntersectionality}) implies that when the the bottom-level intersectional groups are protected (blue curve), differential fairness will automatically protect all higher-level groups and subgroups (red curve).  Since subgroup fairness does not have this property, all of the groups and subgroups (red curve) must be protected explicitly with their own group indicators $g(\mathbf{s})$.  Although the number of bottom-level groups grows exponentially in the number of protected attributes, the total number of groups grows much faster, at the combinatorial rate of $\sum_{j=1}^p \binom{p}{j} K^j$.

\begin{figure}[t]
	\includegraphics[width=0.45\textwidth]{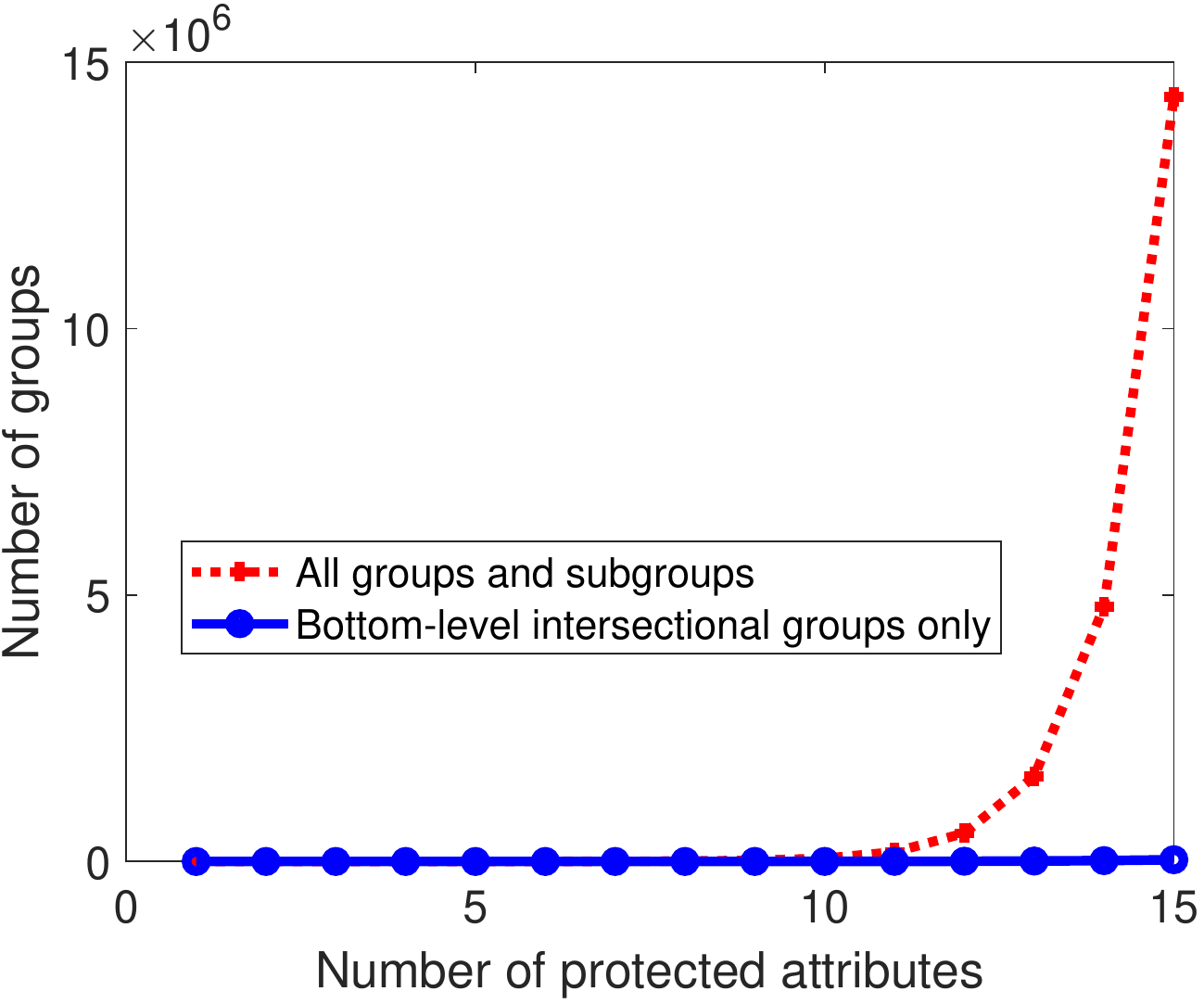}
	\caption{\label{fig:countIntersectionalitySavings} The number of groups and intersectional subgroups to protect when varying the number of protected attributes, with 2 values per protected attribute. }
\end{figure}

\subsection{Privacy Interpretation}
\label{sec:bayesPrivacy}
The differential fairness definition, and the resulting level of fairness obtained at any particular measured fairness parameter $\epsilon$, can be interpreted by viewing the definition through the lens of privacy.  Differential fairness ensures that given the outcome, \emph{an untrusted vendor/adversary can learn very little about the protected attributes of the individual}, relative to their prior beliefs, assuming their prior beliefs are in $\Theta$:
\begin{equation}
e^{-\epsilon}\frac{P(\mathbf{s}_i|\theta)}{P(\mathbf{s}_j|\theta)} \leq \frac{P(\mathbf{s}_i|M(\mathbf{x}) = y, \theta)}{P(\mathbf{s}_j|M(\mathbf{x}) = y, \theta)}\leq e^\epsilon \frac{P(\mathbf{s}_i|\theta)}{P(\mathbf{s}_j|\theta)} \mbox{ .} \label{eqn:bayesPrivacy}
\end{equation}
E.g., if a loan is given to an individual, an adversary's Bayesian posterior beliefs about their race and gender will not be substantially changed.  Thus, the adversary will be unable to infer that ``this individual was given a loan, so they are probably white and male.''
Our definition thereby provides fairness guarantees when the user of $M(\mathbf{x})$ is untrusted, cf. \cite{dwork2012fairness}, by preventing subsequent discrimination, e.g. in retaliation to a fairness correction. 
Although DF is a \emph{population-level} definition, it provides a privacy guarantee for \emph{individuals}.
The privacy guarantee only holds if $\theta \in \Theta$, which may not always be the case.  Regardless, the value of $\epsilon$ may typically be interpreted as a privacy guarantee against a ``reasonable adversary.''  The privacy guarantee is inherited from \emph{pufferfish}, a general privacy framework which DF instantiates \cite{kifer2014pufferfish}.

\begin{table*}[t]
\centering
\resizebox{0.95\textwidth}{!}{
\begin{tabular}{llcccccc}
\hline
\multicolumn{2}{c}{\multirow{2}{*}{Models}}                                                                        & \multicolumn{3}{c}{\textbf{DF-Classifier}}                                  & \multicolumn{2}{c}{\textbf{SF-Classifier}}    & \multirow{2}{*}{\textbf{Typical Classifier}} \\
\multicolumn{2}{c}{}                                                                                               & $\epsilon_{1}=0.0$ & $\epsilon_{1}=0.2231$ & $\epsilon_{1}=\epsilon_{data}$ & $\gamma_{1}=0.0$ & $\gamma_{1}=\gamma_{data}$ &                                              \\ \hline
\multirow{3}{*}{Performance Measures}                                                              & Accuracy      & 0.811              & 0.823                 & 0.839                          & 0.835            & 0.839                      & 0.839                                        \\
                                                                                                   & F1 Score      & 0.470              & 0.520                 & 0.600                          & 0.550            & 0.590                      & 0.602                                        \\
                                                                                                   & ROC AUC       & 0.849              & 0.862                 & 0.885                          & 0.882            & 0.886                      & 0.892                                        \\ \hline
\multirow{4}{*}{\begin{tabular}[c]{@{}l@{}}Fairness Measures\\ (using soft counts)\end{tabular}}   & $\epsilon$-DF & 0.428              & \textbf{0.379}                 & 1.629                          & 1.334            & 1.590                      & 1.646                                        \\
                                                                                                   & $\gamma$-SF   & \textbf{0.006}              & 0.012                 & 0.039                          & 0.026            & 0.034                      & 0.041                                        \\
                                                                                                   & Bias Amp-DF   & -0.952             & \textbf{-1.001}                & 0.249                          & -0.046           & 0.210                      & 0.266                                        \\
                                                                                                   & Bias Amp-SF   & \textbf{-0.027}             & -0.021                & 0.006                          & -0.007           & 0.001                      & 0.008                                        \\ \hline
\multirow{4}{*}{\begin{tabular}[c]{@{}l@{}}Fairness Measures  \\ (using hard counts)\end{tabular}} & $\epsilon$-DF & \textbf{1.602}              & 1.676                 & 2.034                          & 1.843            & 1.843                      & 2.115                                        \\
                                                                                                   & $\gamma$-SF   & \textbf{0.003}              & 0.010                 & 0.034                          & 0.017            & 0.026                      & 0.040                                        \\
                                                                                                   & Bias Amp-DF   & \textbf{-0.303}             & -0.229                & 0.129                          & -0.062           & -0.062                     & 0.210                                        \\
                                                                                                   & Bias Amp-SF   & \textbf{-0.037}             & -0.030                & -0.006                         & -0.023           & -0.014                     & 0.000                                        \\ \hline
\end{tabular}
}
\textbf{\caption{\small Comparison of intersectionally fair classifiers with the typical classifier on the Adult dataset ($\epsilon_1 = 0.2231$ is the 80\% rule).}
\label{fig:Adult-dataset}}
\end{table*}

\subsection{Economic Guarantees}
We also show that differential fairness provides economic guarantees.  
An $\epsilon$-differentially fair mechanism admits a disparity in expected utility of as much as a factor of $\exp(\epsilon) \approx 1 + \epsilon$ (for small values of $\epsilon$) between pairs of protected groups with $\mathbf{s}_i \in A$, $\mathbf{s}_j \in A$, for any utility function that could be chosen.  E.g., consider a loan approval process, where the utility of being given a loan is 1, and being denied is 0.   Suppose the approval process is $\ln(3)$-differentially fair
. The process could then be three times as likely to award a loan to white men as to white women, and thus award white men three times the expected utility as white women.  The proof follows the case of differential privacy \cite{dwork2013algorithmic}.  Let $u(y): \mbox{Range}(M(\mathbf{x})) \rightarrow \mathds{R}_{\geq 0}$ be a utility function.  Then:
\begin{align}
\small 
E_{P_{M,\theta}}\big [u(y)|\mathbf{s}_i \big ] &= \int P_{M,\theta}(y|\mathbf{s}_i) u(y) dy \label{eqn:utility} \\
&\leq \int e^{\epsilon} P_{M,\theta}(y|\mathbf{s}_j) u(y) dy 
= e^{\epsilon} E_{P_{M,\theta}}\big [u(y)|\mathbf{s}_j\big ] \mbox{ .} \nonumber 
\end{align}
Similarly, for $(\epsilon_2 - \epsilon_1)$-DF bias amplification, $M(\mathbf{x})$ admits at most an $\exp(\epsilon_2 - \epsilon_1) \approx 1 + \epsilon_2 - \epsilon_1$ (for small values of $\epsilon_2 - \epsilon_1$) multiplicative increase in the disparity of expected utility between pairs of protected intersections of groups with $\mathbf{s}_i \in A$, $\mathbf{s}_j \in A$, relative to the data generating process $\mathcal{M}$.

\begin{table*}[t]
\centering
\resizebox{0.95\textwidth}{!}{
\begin{tabular}{llcccccc}
\hline
\multicolumn{2}{c}{\multirow{2}{*}{Models}}                                                                        & \multicolumn{3}{c}{\textbf{DF-Classifier}}                                  & \multicolumn{2}{c}{\textbf{SF-Classifier}}    & \multirow{2}{*}{\textbf{Typical Classifier}} \\
\multicolumn{2}{c}{}                                                                                               & $\epsilon_{1}=0.0$ & $\epsilon_{1}=0.2231$ & $\epsilon_{1}=\epsilon_{data}$ & $\gamma_{1}=0.0$ & $\gamma_{1}=\gamma_{data}$ &                                              \\ \hline
\multirow{3}{*}{Performance Measures}                                                              & Accuracy      & 0.686              & 0.684                 & 0.692                          & 0.690            & 0.697                      & 0.700                                        \\
                                                                                                   & F1 Score      & 0.633              & 0.642                 & 0.643                          & 0.622            & 0.647                      & 0.641                                        \\
                                                                                                   & ROC AUC       & 0.730              & 0.723                 & 0.734                          & 0.719            & 0.739                      & 0.734                                        \\ \hline
\multirow{4}{*}{\begin{tabular}[c]{@{}l@{}}Fairness Measures\\ (using soft counts)\end{tabular}}   & $\epsilon$-DF & \textbf{0.180}              & 0.281                 & 0.410                          & 0.404            & 0.468                      & 0.773                                        \\
                                                                                                   & $\gamma$-SF   & \textbf{0.006}              & 0.021                 & 0.033                          & 0.007            & 0.028                      & 0.035                                        \\
                                                                                                   & Bias Amp-DF   & \textbf{-0.360}             & -0.259                & -0.130                         & -0.136           & -0.072                     & 0.233                                        \\
                                                                                                   & Bias Amp-SF   & \textbf{-0.015}             & 0.000                 & 0.012                          & -0.014           & 0.007                      & 0.014                                        \\ \hline
\multirow{4}{*}{\begin{tabular}[c]{@{}l@{}}Fairness Measures  \\ (using hard counts)\end{tabular}} & $\epsilon$-DF & \textbf{0.207}              & 0.671                 & 0.884                          & 0.825            & 0.860                      & 0.897                                        \\
                                                                                                   & $\gamma$-SF   & \textbf{0.015}              & 0.045                 & 0.060                          & 0.017            & 0.048                      & 0.062                                        \\
                                                                                                   & Bias Amp-DF   & \textbf{-0.339}             & 0.125                 & 0.338                          & 0.279            & 0.314                      & 0.351                                        \\
                                                                                                   & Bias Amp-SF   & \textbf{-0.025}             & 0.005                 & 0.020                          & -0.023           & 0.008                      & 0.022                                        \\ \hline
\end{tabular}
}
\textbf{\caption{\small Comparison of intersectionally fair classifiers with the typical classifier on the COMPAS dataset  ($\epsilon_1 = 0.2231$ is the 80\% rule).}
\label{fig:COMPAS-dataset}}
\end{table*}

\subsection{Generalization Guarantees} \label{sec:generalization}
In order to ensure that an algorithm is truly fair, it is important that the fairness properties obtained on a dataset will extend to the underlying population.  Kearns et al. \cite{kearns2018preventing} proved that empirical estimates of the quantities per group which determine subgroup fairness, $P_{M,\theta}(y = 1 | g(\mathbf{s}) = 1) P_\theta(g(\mathbf{s}) = 1)$, will be similar to their true values, with enough data relative to the VC dimension of the classification model's concept class $\mathcal{H}$.  We state their result below.
\begin{theorem} \label{thm:SFgeneralization}
	\textbf{\cite{kearns2018preventing}'s Theorem 2.11 (SP Uniform Convergence).} Fix a class of functions $\mathcal{H}$ and a class of group indicators $\mathcal{G}$. For any distribution $P$, let $S \sim P^m$ be a dataset consisting of $m$ examples $(\mathbf{x}_i, y_i)$ sampled i.i.d. from $P$. Then for any $0 < \delta < 1$, with probability $1 - \delta$, for every $h \in \mathcal{H}$ and $g \in \mathcal{G}$, we have:
	\begin{align}
	&|P(y = 1 | g(\mathbf{s}) = 1,h) P(g(\mathbf{s}) = 1) \nonumber \\
	&- P_{S}(y = 1 | g(\mathbf{s}) = 1, h) P_S(g(\mathbf{s}) = 1)| \nonumber \\
	&\leq \tilde{O} \Big ( \sqrt{\frac{(\mbox{VCDIM}(\mathcal{H}) + \mbox{VCDIM}(\mathcal{G}))\log m + \log(1/\delta)}{m}} \Big ) \mbox{ .}
	\end{align}	
\end{theorem}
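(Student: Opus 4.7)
The plan is to reduce the statement to a standard Vapnik–Chervonenkis (VC) uniform convergence claim over a suitably defined class of joint events. The first step is to observe that the quantity whose empirical and population versions we want to compare factors as a joint probability:
\begin{align}
P(y = 1 \mid g(\mathbf{s}) = 1, h)\, P(g(\mathbf{s}) = 1) = P(h(\mathbf{x}) = 1 \wedge g(\mathbf{s}) = 1)\,,
\end{align}
and similarly for the empirical version. Thus the theorem reduces to showing that empirical frequencies converge uniformly to true probabilities over the event family
$\mathcal{E} = \{E_{h,g} : h \in \mathcal{H},\; g \in \mathcal{G}\}$, where $E_{h,g} = \{(\mathbf{x},\mathbf{s}) : h(\mathbf{x}) = 1 \wedge g(\mathbf{s}) = 1\}$.

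The second step is to control the complexity of $\mathcal{E}$. By Sauer–Shelah, on any set of $m$ points $\mathcal{H}$ realizes at most $O(m^{\text{VCDIM}(\mathcal{H})})$ distinct labelings, and $\mathcal{G}$ at most $O(m^{\text{VCDIM}(\mathcal{G})})$. Since each labeling by $E_{h,g}$ is the coordinatewise AND of one labeling from each class, the shatter coefficient of $\mathcal{E}$ on any $m$ points is bounded by the product, i.e. $O(m^{\text{VCDIM}(\mathcal{H}) + \text{VCDIM}(\mathcal{G})})$. Applying the converse direction of Sauer–Shelah then gives $\text{VCDIM}(\mathcal{E}) = O(\text{VCDIM}(\mathcal{H}) + \text{VCDIM}(\mathcal{G}))$, up to logarithmic factors absorbed into the $\tilde{O}$ notation.

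The third step is to invoke the classical VC uniform convergence theorem on $\mathcal{E}$: with probability at least $1 - \delta$ over $S \sim P^m$, the difference between empirical and true measures of \emph{every} event in $\mathcal{E}$ is simultaneously bounded by
\begin{align}
\tilde{O}\!\left(\sqrt{\frac{\text{VCDIM}(\mathcal{E})\log m + \log(1/\delta)}{m}}\right)\,.
\end{align}
Substituting the VC-dimension bound from the previous step yields the claimed rate, and rewriting the event probabilities back in the conditional form recovers the statement of the theorem.

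The main obstacle is the VC-dimension bound on the product class $\mathcal{E}$. The product construction is standard, but one must be careful that $\mathbf{s}$ is a deterministic component of $\mathbf{x}$ (so the joint distribution is well-defined and the two labelings are taken over the same sample), and that the constant and logarithmic factors hidden in $\tilde{O}$ absorb the $\log m$ inflation coming from passing back and forth through Sauer–Shelah. A lesser concern is simply ensuring measurability of the supremum over $\mathcal{H} \times \mathcal{G}$, which is routine under standard assumptions on the hypothesis and group-indicator classes.
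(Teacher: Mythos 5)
The paper does not prove this statement at all: it is quoted verbatim from Kearns et al.\ \cite{kearns2018preventing} as an imported result (their Theorem 2.11), and the present paper only uses it as a black box to derive its own DF generalization guarantee. So there is no in-paper proof to compare against. Judged on its own, your argument is correct and is essentially the standard route (and the one the original source follows): the product $P(y=1\mid g(\mathbf{s})=1,h)\,P(g(\mathbf{s})=1)$ is the joint probability of the event $\{h(\mathbf{x})=1 \wedge g(\mathbf{s})=1\}$, the conjunction class $\mathcal{E}$ has VC dimension $O(\mathrm{VCDIM}(\mathcal{H})+\mathrm{VCDIM}(\mathcal{G}))$ up to logarithmic factors by a Sauer--Shelah product bound, and classical VC uniform convergence over $\mathcal{E}$ gives the stated rate. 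Two small points worth making explicit: (i) $y$ here denotes the classifier's output $h(\mathbf{x})$, not a ground-truth label, which is what licenses the factorization; and (ii) when $P(g(\mathbf{s})=1)=0$ or $P_S(g(\mathbf{s})=1)=0$ the conditional probability is undefined and the product should be read as the joint probability (i.e.\ zero), so the bound still holds. Neither affects the substance of your proof.
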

Here, $\tilde{O}$ hides logarithmic factors, and $P_S$ is the empirical distribution from the $S$ samples.
It is natural to ask whether a similar result holds for differential fairness.  As \cite{kearns2018preventing} note, the SF definition was chosen for statistical reasons, revealed in the above equation: the $P_\theta(g(\mathbf{s}) = 1)$ term in SF arises naturally in their generalization bound.  For DF, we specifically avoid this term due to its impact on minority groups, and must instead bound $P_{M,\theta}(y|\mathbf{s})$ per group $\mathbf{s}$.  For this case, we prove the following generalization guarantee.
\begin{theorem}
	 Fix a class of functions $\mathcal{H}$, which without loss of generality aim to discriminate the outcome $y=1$ from any other value, denoted here as $y=0$. For any conditional distribution $P(y,\mathbf{x}|\mathbf{s})$ given a group $\mathbf{s}$, let $S \sim P^m$ be a dataset consisting of $m$ examples $(\mathbf{x}_i, y_i )$ sampled i.i.d. from $P(y,\mathbf{x}|\mathbf{s})$. Then for any $0 < \delta < 1$, with probability $1 - \delta$, for every $h \in \mathcal{H}$, we have:
	 \begin{align}
	 &|P(y = 1|\mathbf{s},h)  - P_{S}(y = 1 |\mathbf{s}, h) | \nonumber \\
	 & \ \ \ \ \ \ \ \ \ \leq \tilde{O} \Big ( \sqrt{\frac{\mbox{VCDIM}(\mathcal{H})\log m + \log(1/\delta)}{m}} \Big ) \mbox{ .}
	 \end{align}
\end{theorem}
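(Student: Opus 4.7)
The plan is to reduce the statement to the classical Vapnik--Chervonenkis uniform convergence bound for a single distribution. The key observation is that the theorem is stated with $S \sim P(y,\mathbf{x}|\mathbf{s})^m$ for a \emph{fixed} group $\mathbf{s}$, so the samples are i.i.d.\ from the group-conditional distribution. Unlike Theorem \ref{thm:SFgeneralization}, there is no additional quantifier over a class $\mathcal{G}$ of group indicators that must be controlled simultaneously, which is precisely why the $\mbox{VCDIM}(\mathcal{G})$ term disappears from the bound.

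First I would rewrite the quantity of interest in the standard form expected by a VC argument. For any $h \in \mathcal{H}$, the term $P(y = 1 \mid \mathbf{s}, h)$ is the probability that the classifier outputs $1$ on an instance drawn from $P(\mathbf{x}|\mathbf{s})$, which equals $\mathbb{E}_{\mathbf{x} \sim P(\cdot\,|\,\mathbf{s})}[\mathds{1}[h(\mathbf{x}) = 1]]$. Its empirical counterpart $P_{S}(y = 1 \mid \mathbf{s}, h)$ is the corresponding sample mean $\frac{1}{m}\sum_{i=1}^m \mathds{1}[h(\mathbf{x}_i) = 1]$. So the object we must control is the supremum over $\mathcal{H}$ of the absolute deviation between a population probability and its empirical frequency for the event class $\{\{\mathbf{x} : h(\mathbf{x}) = 1\} : h \in \mathcal{H}\}$. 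Since $\mathcal{H}$ is a binary concept class, this event class has the same VC dimension as $\mathcal{H}$ itself.

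Next I would invoke the standard VC uniform convergence theorem (e.g.\ Vapnik--Chervonenkis, or Devroye--Gy\"orfi--Lugosi): for any distribution $Q$ on $\mathbf{x}$ and i.i.d.\ sample $S$ of size $m$ drawn from $Q$, with probability at least $1-\delta$,
\begin{equation}
\sup_{h \in \mathcal{H}} \bigl| Q(h(\mathbf{x}) = 1) - P_S(h(\mathbf{x}) = 1)\bigr| \leq \tilde{O}\Bigl(\sqrt{\tfrac{\mbox{VCDIM}(\mathcal{H})\log m + \log(1/\delta)}{m}}\Bigr).
\end{equation}
Applying this with $Q = P(\cdot \mid \mathbf{s})$ yields exactly the claimed bound, since the samples $\mathbf{x}_i$ underlying $S$ are i.i.d.\ from $P(\mathbf{x}|\mathbf{s})$ (marginalizing out the $y_i$, which are not used on either side of the inequality).

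The main obstacle, such as it is, amounts to bookkeeping rather than new probabilistic machinery: one must confirm that $P(y=1 \mid \mathbf{s}, h)$ in the paper's notation is indeed the classifier's output probability conditional on group $\mathbf{s}$ (so the randomness is over $\mathbf{x} \sim P(\cdot|\mathbf{s})$ together with any internal randomness of $h$, which can be integrated into $\mathcal{H}$ without inflating its VC dimension in the usual way), and that the VC dimension of the binary function class transfers to the associated event class without change. Under the stated i.i.d.\ assumption no further concentration inequalities are required beyond the standard VC bound, and the bound then plugs directly into Equation \ref{eqn:DF} per group $\mathbf{s}$ to yield a population-level DF guarantee from its empirical estimate.
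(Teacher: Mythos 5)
Your proof is correct, and it reaches the bound by a slightly different route than the paper. The paper's own proof is a one-line reduction to Kearns et al.'s Theorem 2.11 (stated as Theorem \ref{thm:SFgeneralization}): it takes $\mathcal{G}$ to be the singleton class containing the indicator of the fixed group $\mathbf{s}$, observes that this class has VC dimension $0$, and restricts attention to distributions with $P(g(\mathbf{s}')=1)=1$ so that the group-mass weighting factor equals $1$ and drops out of both sides. You instead bypass the Kearns et al. packaging and appeal directly to classical Vapnik--Chervonenkis uniform convergence for the event class $\{\{\mathbf{x}: h(\mathbf{x})=1\}: h\in\mathcal{H}\}$ under the group-conditional distribution $P(\cdot\mid\mathbf{s})$. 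The two arguments rest on exactly the same machinery --- Theorem \ref{thm:SFgeneralization} with a trivial $\mathcal{G}$ \emph{is} the classical VC bound --- so the substance is identical; your version is more self-contained and makes explicit why the $\mbox{VCDIM}(\mathcal{G})$ term vanishes (no simultaneous quantification over group indicators), while the paper's version is shorter and reuses a result it has already stated. Your closing caveats (that the event class inherits $\mbox{VCDIM}(\mathcal{H})$, and that the $y_i$ labels are not used on either side) are correctly resolved and do not hide any gap.
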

\begin{proof}
	Let $g(\mathbf{s}') = 1$ when $\mathbf{s}' = \mathbf{s}$ and 0 otherwise, and let $\mathcal{G} = \{ g(\mathbf{s}') \}$.  We see that $\mathcal{G}$ has a VC-dimension of 0.  The result follows directly by applying Theorem \ref{thm:SFgeneralization} (\cite{kearns2018preventing}'s Theorem 2.11) to $\mathcal{H}$ and $\mathcal{G}$, and considering the bound for the distributions $P$ over $(\mathbf{x},y)$ where $P(g(\mathbf{s}') = 1) = 1$.  
\end{proof}
While SF has generalization bounds which depend on the overall number of data points, DF's generalization guarantee requires that we obtain a reasonable number of data points for each intersectional group in order to accurately estimate $\epsilon$-DF.  This difference, the price of removing the minority-biasing term, should be interpreted in the context of the differing goals of our work and \cite{kearns2018preventing}, who aimed to \textbf{prevent fairness gerrymandering} by protecting every conceivable subgroup that could be targeted by an adversary.

In contrast, our goal is to \textbf{uphold intersectionality}, which simply aims to enact a more nuanced understanding of unfairness than with a single protected dimension such as gender or race.  In practice, consideration of 2 or 3 intersecting protected dimensions already improves the nuance of assessment. Sufficient data per intersectional group can often be readily obtained in such cases, e.g. \cite{buolamwini2018gender} studied the intersection of gender and skin color on fairness.  Similarly, \cite{kearns2018preventing} focus on the challenge of \emph{auditing} subgroup fairness when the subgroups cannot easily be enumerated, which is important in the fairness gerrymandering setting.  In contrast, in our intended applications of preserving intersectional fairness the number of intersectional groups is often only around $2^2$ -- $2^5$.

\section{Learning Algorithm}
In this section we introduce a simple, practical learning algorithm for differentially fair classifiers (\emph{DF-Classifiers}).  
Our algorithm uses the fairness cost as a regularizer to balance the trade-off between fairness and accuracy. 
We minimize, with respect to the classifier $M_\mathbf{W}(\mathbf{x})$'s parameters \textbf{W}, a loss function $L_{\mathbf{X}}(\textbf{W})$ plus a penalty on unfairness which is weighted by a tuning parameter $\lambda>0$.  We train fair neural networks using gradient descent (GD) on our objective via backpropagation and automatic differentiation.
The learning objective for training data $\mathbf{X}$ becomes: 
\begin{equation}\label{eq:objective}
    \underset{\textbf{W}}{\text{min}}[L_{\mathbf{X}}(\textbf{W}) + \lambda R_{\mathbf{X}}(\epsilon)]
\end{equation}
where $R_{\mathbf{X}}(\epsilon) =max(0,\epsilon_{M_\mathbf{W}(\mathbf{x})} - \epsilon_1)$ represents the fairness penalty term, and $\epsilon_{M_\mathbf{W}(\mathbf{x})}$ is the $\epsilon$ for $M_\mathbf{W}(\mathbf{x})$. 
To make the objective differentiable, $\epsilon_{M_\mathbf{W}(\mathbf{x})}$ is measured using soft counts (Equation~\ref{eqn:smoothedFairnessSoft}). If $\epsilon_1$ is 0, this penalizes $\epsilon$-DF, and if $\epsilon_1$ is the data's $\epsilon$, this penalizes bias amplification.  Optimizing for bias amplification will also improve $\epsilon$-DF, up to the $\epsilon_1$ threshold. In practice, we found that a warm start optimizing $L_{\mathbf{X}}(\textbf{W})$ only for several ``burn-in'' iterations 
improves convergence. 
For large datasets, stochastic gradient descent (SGD) can be used instead of batch GD. In this case, we recommend that $\epsilon_{M_\mathbf{W}(\mathbf{x})}$ be estimated on a development set $\mathcal{D}$, as minibatch estimates may be unstable in the intersectional data regime.

%
%
%
%
\section{Experiments}
\begin{figure*}[t]
		\centerline{\includegraphics[width=0.9\textwidth]{./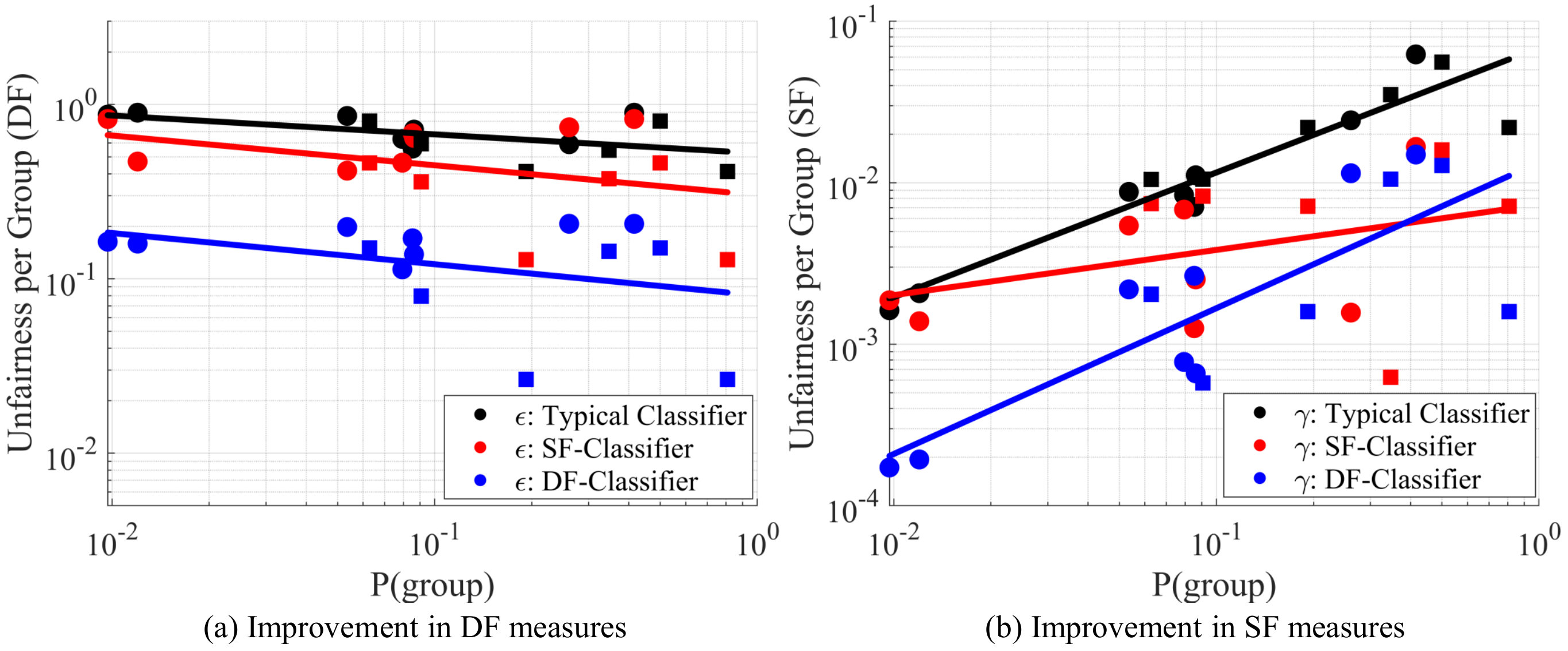}}
		\caption{\small Per-group measurements of (a) $\epsilon$-DF and (b) $\gamma$-SF of the classifiers vs group size (probability), COMPAS dataset, calculated using Equations \ref{eqn:SF} and \ref{eqn:DF} with the group held fixed. Circles: intersectional subgroups. Squares: top-level groups. The methods improve fairness, both per group and overall, but SF-Classifier is empirically seen to ignore minority groups in the overall $\gamma$-SF measurement, calculated as a worst-case over all groups.}
		\label{fig:compareWithSF} 
\end{figure*}

We performed all experiments on two datasets: the Adult 1994 U.S. census income data from the UCI repository \cite{kohavi-nbtree} (protected attributes: \emph{race}, \emph{gender}, USA vs non-USA \emph{nationality}), and the COMPAS dataset regarding a system that is used to predict criminal recidivism \cite{angwin2016machine} (protected attributes: \emph{race} and \emph{gender}).\footnote{Predicted income, used for consequential decisions like housing approval, may result in \emph{digital redlining} \cite{barocas2016big}.}

\subsection{Fair Learning Algorithm}
The goals of our experiments were to demonstrate the practicality of our \emph{DF-Classifier} method in learning an intersectionally fair classifier, and to compare its behavior to a learned subgroup fair \emph{SF-Classifier} and a typical classifier (without the fairness penalty term of Equation~\ref{eq:objective}), especially with regards to minorities.  Instead of \cite{kearns2018preventing}'s algorithm, we trained the SF-Classifier using the same GD+backpropagation approach, replacing $\epsilon$ with $\gamma$ in Equation \ref{eq:objective}, i.e. $R_{\mathbf{X}}(\gamma) =max(0,\gamma_{M_\mathbf{W}(\mathbf{x})} - \gamma_1)$. This simplifies and speeds up learning to handle deep neural networks.

All classifiers were trained on a common neural network architecture via adaptive gradient descent optimization (Adam)  with learning rate = $0.01$ using pyTorch. The configuration of the neural network was $3$ hidden layers, $16$ neurons in each layer, ``relu" and ``sigmoid" activations for the hidden and output layers, respectively. We trained for $500$ iterations, disabling the fairness penalties for the first $50$ ``burn-in'' iterations. 
We chose $\lambda$ as $0.1$ and $1.0$ for \emph{DF-Classifier} and \emph{SF-Classifier}, respectively, as a best trade-off value via grid search over the randomly held out $20\%$ development sets. 

We learned fair classifiers in several settings: 1) we set the target thresholds to perfect fairness, $\epsilon_{1}$=$0.0$ and $\gamma_{1}$=$0.0$ for DF-Classifier and SF-Classifier, respectively, and 2) to penalize bias amplification by the algorithm, by setting the thresholds to $\epsilon_{1}$=$\epsilon_{data}$ and $\gamma_{1}$=$\gamma_{data}$ for DF-Classifier and SF-Classifier, respectively. Finally, to protect the $80\%$-rule we set $\epsilon_{1}$=$-\log 0.8 = 0.2231$ for DF-Classifier only. Since there is no straightforward way to enforce the $80\%$-rule for SF-Classifier, it was not considered in this analysis. 

Tables~\ref{fig:Adult-dataset} and ~\ref{fig:COMPAS-dataset} compare the classifiers on the Adult and COMPAS datasets, respectively. Both DF-Classifier and SF-Classifier were able to \textbf{substantially improve their fairness metrics over the typical classifier, with modest costs in accuracy, F1 score, and ROC AUC}, and the trade-off varied roughly monotonically in the target value $\epsilon_1$ or $\gamma_1$.  Based on \emph{soft count} estimation (Equation \ref{eqn:smoothedFairnessSoft}), the DF-Classifier with $\epsilon_1 = 0$ improved from $\epsilon=1.646$ to $\epsilon=0.428$ on Adult with a loss of $2.8$ percentage points of accuracy. On COMPAS, it improved from $\epsilon=0.773$ to $\epsilon=0.180$, corresponding to a worst-case difference in utility between groups of a factor of $e^\epsilon \approx 1.2$, with a loss of just 1.4 percentage points of accuracy.  When trained to prevent \textbf{bias amplification}, the fairness metrics were improved with \textbf{little} (COMPAS) \textbf{to no} (Adult) \textbf{reduction in accuracy}.  While SF-Classifier typically had slightly higher accuracy under the same settings, \textbf{DF-Classifier often greatly improved $\gamma$-SF as well, while SF-Classifier enjoyed only modest improvements in $\epsilon$-DF}.  The conclusions were similar with ``hard count'' smoothed EDF estimates (Equation \ref{eqn:smoothedFairness}), but the metrics' estimates were higher.


An important goal of this work was to consider the impact of the fairness methods on minority groups. 
In Figure \ref{fig:compareWithSF}, we report the ``per-group unfairness,'' defined as Equations \ref{eqn:SF} and \ref{eqn:DF} with one group held fixed, versus the group's probability (i.e. size) on the COMPAS dataset. Both methods improve their corresponding per-group unfairness measures over the typical classifier. 
On the other hand, similarly to Figure \ref{fig:compareWithSFdata}, the $\gamma$-SF metric only assigns high per-group unfairness values to large groups in its measurement, so \textbf{minority groups are not able to influence the overall $\gamma$-SF unfairness}.  This was not the case \textbf{for $\epsilon$-DF metric, where groups of various sizes had similarly high per-group $\epsilon$ values}.  Furthermore, the \textbf{DF-Classifier improved the per-group fairness under both metrics for groups of all sizes, while the SF-classifier did not improve the per-group $\gamma$-SF for small groups}.  
%
Our overall conclusion is that the \emph{DF-Classifier is able to achieve intersectionally fair classification with minor loss in performance, while providing greater protection to minority groups than when enforcing subgroup fairness}.

\begin{table}[t]
\centering
\resizebox{0.4\textwidth}{!}{
\begin{tabular}{lcccc}
\hline
        & \multicolumn{4}{c}{Gini Coefficient ($G$)}                                        \\ 
Dataset & $\epsilon_{Data}$ & $\gamma_{Data}$ & $\epsilon_{LR}$ & $\gamma_{LR}$ \\ \hline
Adult   & \textbf{0.099}               & 0.256              & \textbf{0.126}              & 0.257            \\ 
COMPAS  & \textbf{0.151}                & 0.376              & \textbf{0.135}              & 0.343            \\ \hline
\end{tabular}
}
\textbf{\caption{\small Comparison of the inequity in the per-group allocation of the $\epsilon$-DF and $\gamma$-SF metrics via the Gini coefficient 
(lower is better).}
\label{fig:gini-table}}
\end{table}

\begin{table}[t]
\centering
\resizebox{0.4\textwidth}{!}{
\begin{tabular}{lcc}
\multicolumn{3}{c}{COMPAS Dataset}                      \\ \hline
Protected attributes      & $\epsilon$-DF & $\gamma$-SF \\ \hline
race                      & 0.1003         & 0.0070       \\
gender                    & 0.9255         & \textbf{\textcolor{red}{0.0656}}       \\
race, gender              & 1.3156         & 0.0604       \\ \hline
\multicolumn{3}{l}{}                                    \\
\multicolumn{3}{c}{Adult Dataset}                       \\ \hline
Protected attributes      & $\epsilon$-DF & $\gamma$-SF \\ \hline
nationality               & 0.2177         & 0.0045       \\
race                      & 0.9188         & 0.0128       \\
gender                    & 1.0266         & \textbf{\textcolor{red}{0.0434}}      \\
gender, nationality       & 1.1511         & 0.0431       \\
race, nationality         & 1.1534         & 0.0163       \\
race, gender              & 1.7511         & 0.0451       \\
race, gender, nationality & 1.9751         & 0.0455       \\ \hline
\end{tabular}
}
\textbf{\caption{\small Protection of intersectionality by DF metric on COMPAS and Adult dataset. The cases in red are where $\gamma$-SF violates the intersectionality property enjoyed by $\epsilon$-DF (Theorem \ref{thm:fullIntersectionality}).}
\label{fig:adult-intersectionality}}
\end{table}

\subsection{Inequity of Fairness Measures}
We have seen that the $\gamma$-SF metric downweights the consideration of minorities (cf. Figures \ref{fig:compareWithSFdata} and \ref{fig:compareWithSF}). In this experiment, we quantify the resulting inequity of fairness consideration using the \emph{Gini coefficient}~\cite{lorenz1905methods}, a commonly used measure of statistical dispersion which is often used to represent the inequity of income. 
The \emph{Gini coefficient ($G$) of a fairness metric} $F$ is calculated as 
\begin{equation}
G = \frac{1}{2\mu}\sum_{i=1}^{n}\sum_{j=1}^{n} P(\mathbf{s}_i)P(\mathbf{s}_j)|F_{\mathbf{s}_i}-F_{\mathbf{s}_j}|\mbox{ ,}
\label{eqn:gini}
\end{equation}
where $\mu = \sum_{i=1}^{n}F_{\mathbf{s}_i}P(\mathbf{s}_i)$ and $P(\mathbf{s}_i)$ is the fraction of population belonging to the $i^{th}$ intersectional group, while $F_{\mathbf{s}_i}$ represents the fairness measure (i.e. per-group $\epsilon$ or $\gamma$) of that group.  For a fixed algorithm and data distribution, a fairness metric with a smaller Gini coefficient distributes its (un)fairness consideration more equitably across the population, which is typically desirable in the sense that \emph{the entire population has a voice in the determination of (un)fairness}.

Table~\ref{fig:gini-table} shows a comparison of $G$ values for the $\epsilon$-DF and $\gamma$-SF metrics on the Adult and COMPAS datasets. Both fairness metrics are measured for the labeled dataset (i.e. $\epsilon_{Data}$) as well as for a logistic regression (LR) classifier (i.e. $\epsilon_{LR}$) trained on the same dataset. In all the experiments, the $G$ value for $\epsilon$-DF is much lower compared to $\gamma$-SF's $G$ value. 
Thus, $\epsilon$-DF was observed to provide a more equitable distribution of its per-group fairness measurements, presumably due to its more inclusive treatment of minority groups.

\subsection{Evaluation of Intersectionality Property}
In our final experiment (Table~\ref{fig:adult-intersectionality}), we studied the ability of $\gamma$-SF to preserve the intersectionality property shown for $\epsilon$-DF in Theorem~\ref{thm:fullIntersectionality}, by measuring fairness with different sets of protected attributes. The property is violated if removing a protected attribute increases the metric.  As expected, $\epsilon$-DF obeyed the intersectionality property, but $\gamma$-SF violated it as $\gamma$ for \emph{gender} $>$ $\gamma$ for \emph{race} $\times$ \emph{gender} (COMPAS), and $\gamma$ for \emph{gender} $>$ $\gamma$ for \emph{gender} $\times$ \emph{nationality} (Adult).

\section{Conclusion}
We introduced three AI fairness definitions which satisfy intersectional fairness desiderata, \emph{differential fairness} and its \emph{bias amplification} and confounder-aware counterparts, and proved their attractive properties regarding the law, privacy, economics, and statistical learning, along with a learning algorithm to enforce them.  With extensive experiments across two datasets, we have shown that our criteria can be practically attained, and they behave more equitably with regard to minority groups than subgroup fairness.  In future work, we plan to investigate the impact of data sparsity on the measurement and enforcement of fairness in the intersectional multi-attribute regime.



\section*{Acknowledgment}
%
We thank Rosie Kar for valuable advice and feedback regarding intersectional feminism.

%
\bibliographystyle{plain}
\bibliography{references}

%
\appendix


\subsection{Proof of Lemma \ref{lem:rewriteDF}}
\begin{proof}
The definition of $\epsilon$-differential fairness is, for any $\theta \in \Theta$, $y \in \mbox{Range}(M)$, $(\mathbf{s}_i, \mathbf{s}_j) \in A \times A$ where $P(\mathbf{s}_i|\theta) > 0$, $P(\mathbf{s}_j|\theta) > 0$,
\begin{align}
e^{-\epsilon} \leq \frac{P_{M, \theta}(M(\mathbf{x}) = y|\mathbf{s}_i, \theta)}{P_{M, \theta}(M(\mathbf{x}) = y|\mathbf{s}_j, \theta)} &\leq e^\epsilon  \mbox{ .}
\end{align}
Taking the log, we can rewrite this as:
\begin{align}
-&\epsilon \leq \log P_{M, \theta}(M(\mathbf{x}) = y|\mathbf{s}_i, \theta) \nonumber \\
&- \log P_{M, \theta}(M(\mathbf{x}) = y|\mathbf{s}_j, \theta) \leq \epsilon \mbox{ .}
\end{align}
The two inequalities can be simplified to:
\begin{align}
| \log P_{M, \theta}(M(\mathbf{x}) = y|\mathbf{s}_i, \theta) - \log P_{M, \theta}(M(\mathbf{x}) = y|\mathbf{s}_j, \theta) | &\leq \epsilon  \mbox{ .}
\end{align}
For any fixed $\theta$ and $y$, we can bound the left hand side by plugging in the worst case over $(\mathbf{s}_i, \mathbf{s}_j)$, 
\begin{align}
&| \log P_{M, \theta}(M(\mathbf{x}) = y|\mathbf{s}_i, \theta) - \log P_{M, \theta}(M(\mathbf{x}) = y|\mathbf{s}_j, \theta) | \nonumber \\
&\leq \log \max_{\mathbf{s}: P(\mathbf{s}|\theta) > 0} P_{M, \theta}(M(\mathbf{x}) = y|\mathbf{s}, \theta) \nonumber \\
&\ \ \ \ - \log \min_{\mathbf{s}: P(\mathbf{s}|\theta) > 0} P_{M, \theta}(M(\mathbf{x}) = y|\mathbf{s}, \theta) \mbox{ .}
\end{align}
Plugging in this bound, which is achievable and hence is tight, the criterion is then equivalent to:
\begin{align}
& \log \max_{\mathbf{s}: P(\mathbf{s}|\theta) > 0} P_{M, \theta}(M(\mathbf{x}) = y|\mathbf{s}, \theta) \nonumber \\
&\ \ \ \ - \log \min_{\mathbf{s}: P(\mathbf{s}|\theta) > 0} P_{M, \theta}(M(\mathbf{x}) = y|\mathbf{s}, \theta) \leq \epsilon  \mbox{ .}
\end{align}
\end{proof}

\subsection{Proof of Theorem \ref{thm:confounders}}
\begin{proof}
	Let $\theta \in \Theta$, $y \in \mbox{Range}(M)$, $c \in C$, and $(\mathbf{s}_i, \mathbf{s}_j) \in A \times A$ where $P(\mathbf{s}_i|\theta) > 0$ and $P(\mathbf{s}_j|\theta) > 0$. We have:
	\begin{align}
	&\frac{P_{M, \theta}(M(x) = y|\mathbf{s}_i, \theta)}{P_{M, \theta}(M(x) = y|\mathbf{s}_j, \theta)} \nonumber \\
	&= \frac{\sum_{c \in C} P_{M, \theta}(M(x) = y|\mathbf{s}_i, c, \theta) P_{M,\theta}(c|\mathbf{s}_i,\theta)}{\sum_{c \in C} P_{M, \theta}(M(x) = y|\mathbf{s}_j, c, \theta) P_{M,\theta}(c|\mathbf{s}_j,\theta)} \nonumber \\
	&= \frac{\sum_{c \in C} \frac{P_{M, \theta}(M(x) = y|\mathbf{s}_i, c, \theta)}{P_{M, \theta}(M(x) = y|\mathbf{s}_j, c, \theta)} P_{\theta}(c|\mathbf{s}_i,\theta)}{\sum_{c \in C} \frac{P_{M, \theta}(M(x) = y|\mathbf{s}_j, c, \theta)}{P_{M, \theta}(M(x) = y|\mathbf{s}_j, c, \theta)} P_{\theta}(c|\mathbf{s}_j,\theta)} \nonumber \\
	&= \sum_{c \in C} \frac{P_{M, \theta}(M(x) = y|\mathbf{s}_i, c, \theta)}{P_{M, \theta}(M(x) = y|\mathbf{s}_j, c, \theta)} P_{\theta}(c|\mathbf{s}_i,\theta)\nonumber \\
	& \leq \sum_{c \in C} e^\epsilon P_{\theta}(c|\mathbf{s}_i,\theta)= e^{\epsilon} \mbox{ .}
	\end{align}
	Reversing $\mathbf{s}_i$ and $\mathbf{s}_j$ and taking the reciprocal shows the other inequality.
\end{proof}

\subsection{Related Work}
\label{sec:related}
This section discusses relationships with other concepts in fairness, privacy, and in the treatment of subsets of protected groups.  

\subsubsection{Fairness Definitions}
An overview of fairness research can be found in \cite{berk2017fairness}.  We briefly describe several of the most influential mathematical definitions of fairness below, and discuss their relationships to our proposed differential fairness criterion.

\textbf{The 80\% rule:} Our criterion is related to the \emph{80\% rule}, a.k.a. the \emph{four-fifths rule}, a guideline for identifying unintentional discrimination in a legal setting which identifies disparate impact in cases where $P(y=1|s_1)/P(y=1|s_2) \leq 0.8$, for a favourable outcome $y=1$, disadvantaged group $s_1$, and best performing group $s_2$ \cite{eeoc1966guidelines}.  This corresponds to testing that $\epsilon \geq -\log 0.8 = 0.2231$, in a version of Equation \ref{eqn:DF} where only the outcome $y=1$ is considered.

\textbf{Demographic Parity:}  \cite{dwork2012fairness} defined (and criticized) the fairness notion of \emph{demographic parity}, a.k.a. \emph{statistical parity}, which requires that $P(y|\mathbf{s}_i) = P(y|\mathbf{s}_j)$ for any outcome $y$ and pairs of protected attribute values $\mathbf{s}_i$, $\mathbf{s}_j$ (here assumed to be a single attribute).  This can be relaxed, e.g. by requiring the total variation distance between the distributions to be less than $\epsilon$.  Differential fairness is closely related as it  also aims to match probabilities of outcomes, but measures differences using ratios, and allows for multiple protected attributes.  The criticisms of \cite{dwork2012fairness} are mainly related to ways in which subgroups of the protected groups can be treated differently while maintaining demographic parity, which they call ``\emph{subset targeting},'' and which \cite{kearns2018preventing} term ``\emph{fairness gerrymandering}.''  Differential fairness explicitly protects the intersection of multiple protected attributes, which can be used to mitigate some of these abuses. 

\textbf{Equalized Odds:} To address some of the limitations with demographic parity, \cite{hardt2016equality} propose to instead ensure that a classifier has equal error rates for each protected group.  This fairness definition, called \emph{equalized odds}, can loosely be understood as a notion of ``demographic parity for error rates instead of outcomes.''  Unlike demographic parity, equalized odds rewards accurate classification, and penalizes systems only performing well on the majority group.  
However, theoretical work has shown that equalized odds is typically incompatible with correctly calibrated probability estimates \cite{pleiss2017fairness}.  It is also a relatively weak notion of fairness from a civil rights perspective compared to demographic parity, as it does not ensure that outcomes are distributed equitably.  Hardt et al. also propose a variant definition called \emph{equality of opportunity}, which relaxes equalized odds to only apply to a ``deserving'' outcome.  It is straightforward to extend differential fairness to a definition analogous to equalized odds, although we leave the exploration of this for future work. A more recent algorithm for enforcing equalized odds and equality of opportunity for kernel methods was proposed by \cite{donini2018empirical}.


\textbf{Individual Fairness (``Fairness Through Awareness''):}  The \emph{individual fairness} definition, due to \cite{dwork2012fairness}, mathematically enforces the principle that \emph{similar individuals should get similar outcomes} under a classification algorithm.  An advantage of this approach is that it preserves the privacy of the individuals, which can be important when the user of the classifications (the \emph{vendor}), e.g. a banking corporation, cannot be trusted to act in a fair manner.  However, this is difficult to implement in practice as one must define ``similar'' in a fair way.  The individual fairness property also does not necessarily generalize beyond training set.  In this work, we take inspiration from Dwork et al.'s \emph{untrusted vendor} scenario, and the use of a privacy-preserving fairness definition to address it.

\textbf{Counterfactual Fairness: } \cite{kusner2017counterfactual} propose a causal definition of fairness.  Under their \emph{counterfactual fairness} definition, changing protected attributes $A$, while holding things which are not causally dependent on $A$ constant, will not change the predicted distribution of outcomes.  While theoretically appealing, there are difficulties in implementing this in practice.  First, it requires an accurate causal model at the fine-grained individual level, while even obtaining a correct population-level causal model is generally very difficult.  To implement it, we must solve a challenging causal inference problem over unobserved variables, which generally requires approximate inference algorithms. (In the case of differential fairness, we advocate the use of Bayesian models which typically require approximate inference as well, although empirical distributions can be used if sufficient data is available.) Finally, to achieve counterfactual fairness, the predictions (usually) cannot make direct use of any descendant of $A$ in the causal model.  This generally precludes using \emph{any of the observed features} as inputs.

\textbf{Threshold Tests:} \cite{simoiu2017problem} address \emph{infra-marginality} by modeling risk probabilities for different subsets (i.e. attribute values) within each protected category, and requiring algorithms to threshold these probabilities at the same points when determining outcomes.  In contrast, based on \emph{intersectionality} theory, our proposed differential fairness criterion specifies protected categories whose intersecting subsets should be treated equally, regardless of differences in risk across the subsets.  Our definition is appropriate when the differences in risk are due to structural systems of oppression, i.e. the risk probabilities themselves are impacted by an unfair process.  We also provide a \emph{bias amplification} version of our metric, following \cite{zhao2017men}, which is more in line with the infra-marginality perspective.

\subsubsection{Privacy Definitions}

\textbf{Differential Privacy:} Our work on fairness is inspired by \emph{differential privacy}, the gold-standard notion of privacy for data-driven algorithms \cite{dwork2013algorithmic}.  Essentially, differential privacy is a promise: if an individual contributes their data to a dataset, their resulting utility, due to algorithms applied to that dataset, will not be substantially affected.  The privacy guarantee is obtained via the use of randomized algorithms, typically by adding sufficient noise, e.g. from the Laplace distribution, in order to obfuscate the impact of any one data point on the algorithms' outputs.

\begin{definition} $M(\mathbf{x})$ is $\epsilon$-differentially private if	
	\begin{equation*}
	\frac{P(M(\mathbf{x}) \in \mathcal{S})}{P(M(\mathbf{x}') \in \mathcal{S})} \leq e^\epsilon 
	\end{equation*}
	for all outcomes $\mathcal{S}$, and pairs of databases $\mathbf{x}$, $\mathbf{x}'$ differing in a single element.
\end{definition}
Similarly to differential privacy, our proposed differential fairness definition bounds ratios of probabilities of outcomes resulting from a mechanism.  However, there are several important differences.  When bounding these ratios, differential fairness considers different values of a set of protected attributes, rather than databases that differ in a single element.  It posits a specified set of possible distributions which may generate the data, while differential privacy implicitly assumes that the data are independent \cite{kifer2014pufferfish}.  Finally, since differential fairness considers randomness in data as well as in the mechanism, it can be satisfied with a deterministic mechanism, while differential privacy can only be satisfied with a randomized mechanism.


\subsubsection{Other Related Work}

\textbf{Fairness and Intersectionality:} Of particular relevance to this work, fairness in an intersectional setting has been considered by \cite{buolamwini2018gender} in a computer vision context, and by \cite{kearns2018preventing} and \cite{hebert-johnson2018multicalibration}, who aim to protect certain subgroups by preventing ``fairness gerrymandering.''

\textbf{Fairness and Uncertainty:} Bayesian modeling of fairness has been performed by \cite{simoiu2017problem} in the context of stop-and-frisk policing, and by \cite{kusner2017counterfactual}, who use Bayesian inference on a causal model.  As an alternative to the Bayesian methodology, adversarial methods are another strategy for managing uncertainty in a fairness context, e.g. \cite{beutel2017data} apply this approach to the setting of ensuring fairness given a limited number of observations in which demographic information is available.  \cite{roth2006modeling} 
study various hypothesis testing methods for the 80\% rule in the small data regime.

\textbf{Fairness and Privacy:}  The work of \cite{jagielski2018differentially} also addresses untrusted vendors, focusing on differentially private fair learning algorithms (with respect to protected attributes) which obtain obtain fairness under a different criterion.  In contrast, differential fairness ensures that the behavior of the \emph{final algorithm}, rather than the learning process for the algorithm, preserves the privacy of the individuals' protected attributes.

\end{document}